\definecolor{Gray}{gray}{0.9}
\begin{document}
%
% paper title
% Titles are generally capitalized except for words such as a, an, and, as,
% at, but, by, for, in, nor, of, on, or, the, to and up, which are usually
% not capitalized unless they are the first or last word of the title.
% Linebreaks \\ can be used within to get better formatting as desired.
% Do not put math or special symbols in the title.
%\title{Self-Convolution: A Highly-Efficient Operator for Non-Local Image Restoration}
\title{Exploiting Non-Local Priors via Self-Convolution For Highly-Efficient Image Restoration}
%
%
% author names and IEEE memberships
% note positions of commas and nonbreaking spaces ( ~ ) LaTeX will not break
% a structure at a ~ so this keeps an author's name from being broken across
% two lines.
% use \thanks{} to gain access to the first footnote area
% a separate \thanks must be used for each paragraph as LaTeX2e's \thanks
% was not built to handle multiple paragraphs
%

\author{Lanqing~Guo, Zhiyuan~Zha,~\IEEEmembership{Member,~IEEE}, Saiprasad~Ravishankar,~\IEEEmembership{Senior Member,~IEEE,}
        and Bihan~Wen,~\IEEEmembership{Member,~IEEE}% <-this % stops a space
\IEEEcompsocitemizethanks{
\IEEEcompsocthanksitem L. Guo, Z. Zha and B. Wen are with School of Electrical \& Electronic Engineering, Nanyang Technological University, Singapore 639798.  E-mail: (lanqing001@e.ntu.edu.sg, zhiyuan.zha@ntu.edu.sg, bihan.wen@ntu.edu.sg).
\IEEEcompsocthanksitem S. Ravishankar is with the Departments of Computational Mathematics, Science and Engineering, and Biomedical Engineering at Michigan State University, East Lansing, MI, USA 48824. E-mail: ravisha3@msu.edu.
}% <-this % stops a space
}

% note the % following the last \IEEEmembership and also \thanks - 
% these prevent an unwanted space from occurring between the last author name
% and the end of the author line. i.e., if you had this:
% 
% \author{....lastname \thanks{...} \thanks{...} }
%                     ^------------^------------^----Do not want these spaces!
%
% a space would be appended to the last name and could cause every name on that
% line to be shifted left slightly. This is one of those "LaTeX things". For
% instance, "\textbf{A} \textbf{B}" will typeset as "A B" not "AB". To get
% "AB" then you have to do: "\textbf{A}\textbf{B}"
% \thanks is no different in this regard, so shield the last } of each \thanks
% that ends a line with a % and do not let a space in before the next \thanks.
% Spaces after \IEEEmembership other than the last one are OK (and needed) as
% you are supposed to have spaces between the names. For what it is worth,
% this is a minor point as most people would not even notice if the said evil
% space somehow managed to creep in.

% The paper headers
\markboth{Journal of \LaTeX\ Class Files,~2021}%
{Shell \MakeLowercase{\textit{et al.}}: Bare Demo of IEEEtran.cls for IEEE Journals}
% The only time the second header will appear is for the odd numbered pages
% after the title page when using the twoside option.
% 
% *** Note that you probably will NOT want to include the author's ***
% *** name in the headers of peer review papers.                   ***
% You can use \ifCLASSOPTIONpeerreview for conditional compilation here if
% you desire.

% If you want to put a publisher's ID mark on the page you can do it like
% this:
%\IEEEpubid{0000--0000/00\$00.00~\copyright~2015 IEEE}
% Remember, if you use this you must call \IEEEpubidadjcol in the second
% column for its text to clear the IEEEpubid mark.

\newcommand{\add}[1]{\textcolor{black}{#1}}

\newcommand{\zhiyuan}[1]{\textcolor{red}{#1}}

% use for special paper notices
%\IEEEspecialpapernotice{(Invited Paper)}
\newcommand{\argmax}{\operatorname{arg\,max}}
\newcommand{\argmin}{\operatorname{arg\,min}}
% notations
\newcommand{\ie}{\textit{i.e., }}
\newcommand{\eg}{\textit{e.g., }}
\newcommand{\st}{\textit{s.t.}}
\def\etal{\emph{et al.}}
% space
\newcommand{\Cmb}{\mathbb{C}}
\newcommand{\Rmb}{\mathbb{R}}
% theories
\newtheorem{theorem}{Theorem}
\newtheorem{lemma}{Lemma}
\newtheorem{corollary}{Corollary}
\newtheorem{conj}{Conjecture}
\newtheorem{proposition}{Proposition}
\newtheorem{definition}{Definition}
\newtheorem{claim}{Claim}
\newtheorem{remark}{Remark}

% make the title area
\maketitle

% As a general rule, do not put math, special symbols or citations
% in the abstract or keywords.
\begin{abstract}
Constructing effective image priors is critical to solving ill-posed inverse problems in image processing and imaging. 
Recent works proposed to exploit image non-local similarity for inverse problems by grouping similar patches and demonstrated state-of-the-art results in many applications.
However, compared to classic methods based on filtering or sparsity, most of the non-local algorithms are time-consuming, mainly due to the highly inefficient and redundant block matching step, where the distance between each pair of overlapping patches needs to be computed.
In this work, we propose a novel Self-Convolution operator to exploit image non-local similarity in a self-supervised way.
The proposed Self-Convolution can generalize the commonly-used block matching step and produce equivalent results with much cheaper computation.
Furthermore, by applying Self-Convolution, we propose an effective multi-modality image restoration scheme, which is much more efficient than conventional block matching for non-local modeling.
Experimental results demonstrate that (1) Self-Convolution can significantly speed up most of the popular non-local image restoration algorithms, with two-fold to nine-fold faster block matching, and (2) the proposed multi-modality image restoration scheme achieves superior denoising results in both efficiency and effectiveness on RGB-NIR images. 
The code is publicly available at \href{https://github.com/GuoLanqing/Self-Convolution}{https://github.com/GuoLanqing/Self-Convolution}.
\end{abstract}

% Note that keywords are not normally used for peerreview papers.
\begin{IEEEkeywords}
Image restoration, Block matching, Online learning, Convolution, Non-local similarity.
\end{IEEEkeywords}

% For peer review papers, you can put extra information on the cover
% page as needed:
% \ifCLASSOPTIONpeerreview
% \begin{center} \bfseries EDICS Category: 3-BBND \end{center}
% \fi
%
% For peerreview papers, this IEEEtran command inserts a page break and
% creates the second title. It will be ignored for other modes.
\IEEEpeerreviewmaketitle

\section{Introduction}
\IEEEPARstart{I}{mage} restoration (IR) refers to the process of forming an
image from a collection of measurements.
Despite today’s vast
improvement in camera sensors, digital images are often still
corrupted by severe noise in low-light conditions, which might degrade the subsequent vision tasks.
Thus,
image restoration or reconstruction is a challenging and essential inverse
problem, where we aim to recover the underlying clean image $\mathbf{x} \in \Rmb^{N}$ from its degraded/noisy measurements $\mathbf{y} \in \Rmb^{M}$, which under a linear imaging model satisfy $\mathbf{y} = \mathbf{A} \mathbf{x} + \mathbf{e}$. 
Here, $\mathbf{A}$ and $\mathbf{e}$ denote the sensing operator and additive noise, respectively.
Different forms of $\mathbf{A}$ and $\mathbf{e}$ correspond to a range of important setups, including image denoising, inpainting, super-resolution, compressed sensing, etc. 
As most IR problems are ill-posed, exploiting effective image priors as regularizers is critical to the success of many IR schemes.

Classic IR algorithms take advantage of local image properties. One popular model is \textit{sparsity}, \ie patches of natural images are known to be sparse in some transform or dictionary domains~\cite{elad2006image, ravishankar2012learning, mairal2009non, xu2018trilateral}, thus one can reduce noise or artifacts by exploiting the low-dimensionality of the image patches compared to the noise. 
Beyond local properties, images contain non-local structures, \eg non-local self-similarity (NSS) in the spatial domain \cite{buades2005non}.
Many recent IR algorithms~\cite{dabov2006image, gu2014weighted,wen2017strollr2d,dong2012nonlocally,xu2015patch, guo2017patch,zhang2014group} exploit NSS by grouping similar patches using block matching (BM), followed by applying a regularizer (\eg based on low-rankness, joint sparsity, etc.) to each group, which have led to state-of-the-art IR results.
BM originated from the field of video processing and was very popular among several motion estimation techniques~\cite{kilthau2002full, koga1981motion}. Dabov \etal~\cite{dabov2006image} first applied the BM concept in image processing to exploit the NSS property of images.

However, the BM operator used by the non-local algorithms requires computing the euclidean distance between every pair of patches, which is computationally very expensive.
There are many existing Fast Block Matching methods~\cite{kilthau2002full, koga1981motion} to address the high computational cost of BM in motion estimation. Most of them examine only a subset of the possible locations within a search window, which can be used for computations much more efficiently than full search.
Different from BM in motion estimation, almost all non-local IR algorithms work with overlapping image patches in BM to suppress blocky artifacts, which makes the computation highly redundant.
These drawbacks limit the efficiency of non-local IR algorithms, especially when processing large-scale and high-dimensional data (\eg multi-channel images, volumetric data, and videos)~\cite{xu2017multi,wen2017joint}.

To address this problem, we propose a novel Self-Convolution operator for non-local image modeling. It can reduce redundancy in non-local similarity calculations to a certain extent, and
exploit NSS in a highly efficient way. We prove that the Self-Convolution formulation can unify several non-local modeling methods, and can be made exactly equivalent to the BM operation.
% Experimental results show that Self-Convolution can significantly speed up many popular non-local IR algorithms with $2\times$ to $9\times$ faster BM.
To our best knowledge, Self-Convolution is the first work to investigate BM acceleration in non-local traditional image restoration methods. 
% \reviseds{To the best of our knowledge, this is the first work employing Self-Convolution to investigate BM acceleration in non-local traditional image restoration methods.} 
Since BM is a decoupled step from the subsequent restoration regularizer, Self-Convolution can be flexibly plugged into any BM based image restoration algorithm.
% TODO why add self-MM
With such merits of Self-Convolution, we further propose an effective denoising scheme for multi-modality (MM) images, called online Self-MM. 
It applies the effective STROLLR model~\cite{wen2017strollr2d, wen2020image} for denoising using an online learning framework that is more scalable than batch learning, and achieves superior denoising results in both efficiency and effectiveness.

%Our contributions can be summarized as follows:
Our contributions are summarized as follows:
\begin{itemize}
\item We propose a novel Self-Convolution operator to exploit image NSS in a self-supervised and highly efficient way, which generalizes several commonly-used metrics. 
\item We prove that Self-Convolution can be made exactly equivalent to the commonly used BM operations. Self-Convolution can significantly speed up many popular non-local IR algorithms with $2\times$ to $9\times$ faster BM.
\item We propose a multi-modality image denoising scheme using Self-Convolution and online learning, called online Self-MM. The proposed online Self-MM algorithm outperforms the popular or state-of-the-art algorithms for denoising RGB-NIR images.
\end{itemize}

This paper is an extension of our very recent conference work~\cite{9414124} that briefly investigated a specific Self-Convolution operator.
Compared with this earlier work, here we provide detailed proofs for theoretical support.
Moreover, we propose a novel multi-modality (MM) image denoising scheme that effectively exploits the proposed Self-Convolution. Our experimental results illustrate the properties of the proposed methods and their performance on several datasets, with extensive evaluation and comparisons to prior or related methods.

The remainder of this paper is organized as follows. Section II reviews related works and relevant application domains.
Section III presents the formulation of our Self-Convolution algorithm and its generalizations.
Section IV introduces the proposed online Self-MM model and the corresponding learning formulation. 
Section V demonstrates the efficiency of the proposed Self-Convolution for accelerating various image restoration algorithms and evaluates the performance of the proposed online Self-MM over multi-modality dataset. Section VI concludes with proposals for future work.

\section{Related Work}
In this section, we review existing block matching based image restoration methods and Fast Fourier Transform accelerated convolution in the literature, as well as the difference between our Self-Convolution and self-attention mechanism or non-local neural networks.
\subsection{Block matching based image restoration} 
Employing similar data patches from different locations is widely-used in video processing under the term ``block matching (BM)", where it is used to improve the compression rate by exploiting similarity
among blocks that follow the motion of objects in consecutive frames. 
Traditionally, BM has found successful application in many video processing techniques such as video compression (MPEG standards)~\cite{lin1997fast} and object tracking~\cite{gyaourova2003block}.
The well-known BM3D~\cite{dabov2007image} algorithm first applied the BM concept to the image processing field to group similar patches.
Since then, methods exploiting BM-based non-local similarity have become more prevalent in the image processing field. Table~\ref{tab:BM restoration methods} summarizes some typical algorithms for different block matching based image restoration applications.

\vspace{1mm}
\noindent\textbf{Denoising}.
Image denoising is one of the most important problems in image processing and low-level computer vision.
The goal is to recover high-quality images from their corrupted
measurements, which also improves robustness in various
high-level vision tasks.
Natural images contain non-local structures, such as non-local self-similarity.
Gu \etal~\cite{gu2014weighted} incorporated low-rank matrix approximations using the weighted nuclear norm, where BM is used to construct low-rank matrices that enable denoising. Wen \etal~\cite{wen2020image} proposed a denoising method that jointly exploits the sparsity and low-rankness of data matrices formed by BM.
Besides, BM-based methods can be generalized to video processing. VBM4D~\cite{maggioni2012video} extended BM3D with motion estimation to track objects as they moved throughout the scene. 
Dong \etal~\cite{dong2015low} proposed a multi-frame image denoising algorithm that uses BM to extract similar 3D patches. 
Wen \etal~\cite{wen2017joint} extended the original spatial BM denoising algorithm in~\cite{wen2017strollr2d} to spatio-temporal data to search similar groups across frames. 
He \etal~\cite{he2019non} proposed a hyperspectral image denoising method by grouping similar cubes via BM, and imposing spatial similarity and spectral low-rank structural priors on these groups.

\begin{table}[!t]
\centering
\setlength{\tabcolsep}{0.5em}
\adjustbox{width=1.\linewidth}{
    \begin{tabular}{c|c cc}
        \toprule
         \textbf{Method} & Denoising & Super-Resolution & Inpainting\\ \midrule
       \multirow{5}{*}{\makecell[c]{Gray-Scale \\ Image}} & BM3D~\cite{dabov2006image} & LRTV~\cite{shi2015lrtv} & GSR~\cite{zhang2014group}   \\ 
       & WNNM~\cite{gu2014weighted} & LANR-NLM~\cite{jiang2016single} & NGS~\cite{liu2016nonlocal} \\
       & STROLLR~\cite{wen2017strollr2d} & NLRT~\cite{feng2016compressive} & TSLRA~\cite{guo2017patch} \\
       & GHP~\cite{zuo2013texture} &  JRSR~\cite{chang2018single}& MARLow~\cite{li2016marlow}  \\
    %   & SAIST~\cite{dong2012nonlocal} & SISR~\cite{chen2016single}  & GSRC-NLP~\cite{zha2020group}  & LSNSGR~\cite{chen2017single} \\
    %   &  &   &   & \\
    %   & PGPD~\cite{xu2015patch} &  & HSSE~\cite{zha2021hybrid} &  \\
    %   & $\cdots$  & $\cdots$  &  $\cdots$ &  $\cdots$  \\
    \midrule
       \multirow{3}{*}{\makecell[c]{HD \\ Image}} & NGMeet~\cite{he2019non} & LTTR~\cite{dian2019learning} & NGMeet~\cite{he2019non} \\ 
       & SALT~\cite{wen2017joint}& WLRTR~\cite{chang2020weighted} &NL-LRTC~\cite{ji2018nonlocal} \\
        &NLR-CPTD~\cite{xue2019nonlocal} & NLSTF~\cite{dian2017hyperspectral} & PM-MTGSR~\cite{li2016patch}  \\
        \bottomrule
    \end{tabular}
}
\vspace{0.1in}
    \caption{Some block matching based image restoration algorithms, including different image restoration applications for gray-scale image and high-dimensional (HD) images.}
\label{tab:BM restoration methods}
\end{table}

\vspace{1mm}
\noindent\textbf{Super-resolution}.
Image super-resolution aims to estimate a high-resolution (HR) image from a single or a series of relevant low-resolution (LR) images.
The local property based methods tend to smear out image details, while sometimes they have difficulty with recovering fine structures and textures.
The non-local self-similarity prior takes advantage of the redundancy of similar
patches in natural images to predict the surrounding pixels.
Shi~\etal~\cite{shi2015lrtv} first integrated both local and global information for effective image super-resolution under the constraints of total variation (TV) and low-rankness.
Following that, various methods~\cite{jiang2016single,feng2016compressive,chang2018single} incorporated non-local self-similarity
and local geometry priors into the learning from LR space to HR space.
With the development of devices recently, 3D data, such as Hyperspectral Images and Videos, has gained more and more usage due to the richer spectral or temporal information than single image.
Hence, many methods~\cite{dian2017hyperspectral,dian2019learning,chang2020weighted} extended the spatially non-local similarity to spectral or temporal domains.

\vspace{1mm}
\noindent\textbf{Inpainting}.
Image inpainting refers to the process of recovering the missing pixels in an image.
The inpainting problem includes editing (e.g., object removal), texture synthesis, content-aware image resizing (e.g., image enlargement), etc. 
In this paper, we focus on inpainting for image recovery, in which the missing regions are generally
small (e.g., randomly missing pixels), and the goal of inpainting
is to estimate the underlying complete image.
Similar to denoising, successful image inpainting algorithms~\cite{li2011image,zhang2014group,li2016marlow,guo2017patch}
have exploited sparsity or non-local image structures. 
For instance,
Li~\etal~\cite{li2011image} proposed to iteratively group similar patches and reconstruct each group via sparse approximation.
Zhang~\etal~\cite{zhang2014group} applied dictionary learning
within each group of similar patches, for improved sparse
representation in inpainting problems.
Li~\etal~\cite{li2016marlow} exploited the cross-dimensional correlation in similar patches collected from a single image.
% Guo~\etal~\cite{guo2017patch} recovered the missing pixels from the grouped similar patch matrix using low-rank approximation.

% \add{\noindent\textbf{Compressed sensing}.
% Compressed sensing (CS) is one popular example of an ill-posed inverse problem. CS methods enable accurate reconstruction of a sparse signal from a small number of random measurements.
% Many CS methods~\cite{dong2014compressive,eslahi2016compressive,chang2016compressive} exploit either sparsity, or non-local self-similarity of the image.
% Non-local methods exploit the image self-similarity for high-quality image reconstruction.
% NLR-CS~\cite{dong2014compressive} exploited both the group sparsity of similar patches and the nonconvexity of rank minimization.
% CS-JASR~\cite{eslahi2016compressive} characterized the intrinsic sparsity of natural images by exploiting both local sparsity and non-local 3D sparsity simultaneously.
% The JR-CSR~\cite{chang2016compressive} approach used BM to exploit multi-image non-local low-rankness, and applied joint regularization to correlated images.
% }

\subsection{Fast Fourier Transform- (FFT-) accelerated convolution}
The well-known and widely-used Fast Fourier Transform (FFT), proposed by Cooley and Tukey~\cite{cooley1965algorithm}, has made a significant impact in the signal processing field.
% Several efforts used FFT-based convolutions to reduce the required computations for GPUs~\cite{} and CPUs~\cite{}.
The convolution theorem states that the convolution of any two signals $\mathbf{f}$ and $\mathbf{g}$ can be performed using Fourier transforms via
\begin{equation}
\mathbf{f} \underset{}{*}\mathbf{g}=\mathbf{F}^H(\mathbf{F}(\mathbf{f}) \cdot \mathbf{F}(\mathbf{g}))
\end{equation}
\add{Here, $\underset{}{*}$, $\mathbf{F}$ and $\mathbf{F}^{H}$ denote the operators of convolution, Fourier and inverse Fourier transforms, respectively.}
% \add{Convolution $y(t)=\int a(t-s) \cdot x(s) d s$ is one of the main techniques in digital signal processing. }
Much work in deep learning has been devoted to enhancing the efficiency of and accelerating convolutional layers. Mathieu \etal~\cite{mathieu2013fast} explored the possibility of using FFTs to reduce the arithmetic complexity of convolutions, which was followed by the work of Vasilache \etal~\cite{vasilache2014fast}.

\subsection{Self-attention mechanism and non-local networks} 

\noindent\textbf{Self-attention.}
The self-attention mechanism proposed by Vaswani \etal~\cite{vaswani2017attention} exploits dependencies at each position based on the global context.
% , and is widely used in several image applications~\cite{zhang2018self,parmar2018image}.
Self-attention modules originated and have been widely used in machine translation and natural language processing~\cite{vaswani2017attention,dai2019transformer}. 
This has inspired applications of self-attention and related ideas in image synthesis~\cite{zhang2019self}, image captioning~\cite{yang2016stacked,chen2017sca}, and video prediction~\cite{jia2016dynamic,wang2018non}.
Applications of self-attention in computer vision were complementary to convolution: forms of
self-attention were primarily used to create layers that were
used in addition to, to modulate the output of, or otherwise
in combination with convolutions~\cite{zhao2020exploring}.
For channel-wise attention models~\cite{wang2017residual,hu2018squeeze}, attention weights reweight different channels.
Some approaches~\cite{chen2017sca,woo2018cbam,fu2019dual} incorporated spatial and channel-wise attention to exploit interdependencies across different dimensions.
A number of methods learned to reweight convolutional activations or offset the taps of convolutional kernels~\cite{zhu2019deformable} to reconstruct convolutional features.
Others applied self-attention in specific modules, then fused it into the whole convolutional structure~\cite{zhao2018psanet,wang2018non}.

\vspace{1mm}
\noindent\textbf{Non-local networks.}
The non-local network proposed by Wang \etal~\cite{wang2018non} is a generalization of the classical non-local means operation~\cite{buades2005non} in computer vision, and involves computing the response at a position as a weighted sum of the features at all positions in the input feature maps.
Many researchers noticed the power of the long range dependencies in different image tasks, \eg semantic segmentation~\cite{zhu2019asymmetric,liu2015parsenet,zhao2017pyramid}, object detection~\cite{luo2017non,yang2020context}, as well as image restoration~\cite{liu2018non,plotz2018neural}.

While the self-attention mechanism and non-local networks both exploit feature-domain non-local correlation, the proposed Self-Convolution focuses on the spatial self-similarity of natural images exploited in many frameworks.

\section{Self-Convolution} \label{sec:selfconv}
We start by reviewing several non-local methods based on existing works, and then propose a non-local modeling framework to unify these methods using Self-Convolution that leads to highly-efficient computation. 

To facilitate the discussion, we define the vectorization operation on matrices and tensors (unfolding along the first mode), as $\mathrm{vec}(\cdot): \mathbb{R}^{\sqrt{n} \times \sqrt{n}} \rightarrow \mathbb{R}^n$ and $\mathrm{vecT}(\cdot): \mathbb{R}^{\sqrt{n} \times \sqrt{n} \times p} \rightarrow \mathbb{R}^{np}$, respectively.
The bold uppercase letters (\eg $\mathbf{X}$) denote matrices, while the bold lowercase letters (\eg $\mathbf{x}$) denote vectors.
We use $(j, k)$ to denote the 2D indices or locations in a matrix, and similarly, $(j)$ to denote the 1D indices of a vector. 

\subsection{Preliminary}
Non-local image modeling has demonstrated promising performance for IR tasks by exploiting NSS properties. 
Classic non-local methods, \eg the well-known non-local means (NLM) algorithm~\cite{buades2005non}, exploit the correlation among neighbouring patches with adaptive weighting. 
For each reference patch $\mathbf{X}_i \in \mathbb{R}^{\sqrt{n} \times \sqrt{n}}$ from an image $\mathbf{X} \in \mathbb{R}^{\sqrt{N} \times \sqrt{N}}$, 
NLM computes the weighting coefficients with respect to other patches as follows
\begin{align}
    \nonumber \mathbf{s}_i (j) = \Theta_i^{-1} \text{exp}( -\left \|  \mathbf{X}_j-\mathbf{X}_i \right \|_{F}^2 / b^2 )\;, \;\; \\
    \Theta_i = \sum_{j} \text{exp}(-\left \|  \mathbf{X}_j-\mathbf{X}_i \right \|_{F}^2 / b^2 ) \;\;\; \forall i\, ,
    \label{equ:nlm}
\end{align}
where $b$ is a hyper-parameter scaling the distance metric.
The approach recovers each $\mathbf{X}_i$ using a weighted average of non-local patches as $\sum_{j} \mathbf{s}_i (j) \mathbf{X}_j$.

More recent non-local works apply block matching (BM), which only selects up to $K$ ($K \ll N$) most similar patches to $\mathbf{X}_i$,
with the indices of the similar patches stored in the set $\hat{\mathcal{S}}_i$, $| \hat{\mathcal{S}}_i | \leq K$.
BM is computed by solving
\begin{align}
    \nonumber \hat{\mathcal{S}}^E_i &= \mathop{\arg\min}_{ \mathcal{S}_i:| \mathcal{S}_i | \leq K} \sum _{j \in \mathcal{S}_i} \, d_{E} (\mathbf{X}_j, \, \mathbf{X}_i)  \\
    &= \mathop{\arg\min}_{ \mathcal{S}_i:| \mathcal{S}_i | \leq K } \sum _{j \in \mathcal{S}_i} \|\mathbf{X}_j-\mathbf{X}_i\|_F^2 \;\;\;\; \forall i\;.
    \label{equ:bm}
\end{align}
Here the Euclidean distance serves as the similarity metric. 
Other functions have also been used in different algorithms, \eg transform-domain distance ~\cite{dabov2007image}, or the normalized cross correlation~\cite{wei2008fast}. The latter for the case of (non-negative) natural images leads to
\begin{align}
    \nonumber  \hat{\mathcal{S}}^{NCC}_i  &= \mathop{\arg\min}_{ \mathcal{S}_i:| \mathcal{S}_i | \leq K } \sum _{j \in \mathcal{S}_i} \, d_{NCC} (\mathbf{X}_j, \, \mathbf{X}_i)  \\
    & =  \mathop{\arg\min}_{ \mathcal{S}_i:| \mathcal{S}_i | \leq K } \, - \, \sum _{j \in \mathcal{S}_i} \frac{\text{vec}(\mathbf{X}_j)^T \text{vec}(\mathbf{X}_i)\, }{\| \mathbf{X}_j \|_F \, \| \mathbf{X}_i\|_F } \;\;\;\; \forall i\;.
    \label{equ:cor}
\end{align}
For BM-based non-local methods, a denoiser (\eg based on low-rank or sparse modeling) is applied to each patch group $\{\mathbf{X}_j\}_{j \in \hat{\mathcal{S}}_i}$ to exploit the group-wise similarity for image recovery.

\subsection{A Unified Non-Local Modeling via Self-Convolution}

Though there are various approaches to exploit image non-local self-similarity, the core of these methods is to compute the distance or similarity between each patch pair using different metrics or losses. 
We propose a novel (2D) \textbf{Self-Convolution} operator which efficiently computes the similarity between each $\mathbf{X}_i$ and the image as $\mathbf{C}_i \triangleq \mathbf{X}\, \ast \, \mathbf{X}_i \in \mathbb{R}^{\sqrt{M} \times \sqrt{M}}$, where $\sqrt{M} \triangleq (\sqrt{N} - \sqrt{n} + 1)$, which can be computed as follows~\footnote{We follow the definition of ``convolution'' in neural networks, \ie the ``correlation'' in signal processing.}
\begin{equation} \label{eq:selfConv}
\mathbf{C}_i (q, \, r) = \sum_{l = 1}^{\sqrt{n}} \, \sum_{m = 1}^{\sqrt{n}} \, \mathbf{X}_i (l, \, m) \, \mathbf{X} (q + l - 1, \, r + m - 1).
\end{equation}
Based on the Self-Convolution, we propose a general optimization problem to unify the commonly used non-local modeling methods as follows~\footnote{Self-Convolution is applied to the entire $\mathbf{X}$ in (\ref{eq:unifySelfConv}), for simplicity of analysis. Some non-local algorithms only apply BM within a neighbourhood search window $\Omega_i$ for each $\mathbf{X}_i$. The proposed Self-Convolution can be generalized to such setup using $\mathbf{X}_{\Omega_i} \, \ast \, \mathbf{X}_i$, where $\mathbf{X}_{\Omega_i}$ denotes the sub-image containing all patches within $\Omega_i$.}
\begin{equation}
    \hat{\mathbf{a}}_i \, = \,  \mathop{\argmin}_{\mathbf{a}_i} \,  \| H (\mathbf{C}_i, \mathbf{X}, \mathbf{X}_i) - \mathbf{a}_i\|_F^2 + \rho \, (\mathbf{a}_i) \;\;\; \forall i\;.
    \label{eq:unifySelfConv}
\end{equation}
Here, the penalty $\rho \, (\mathbf{a}_i)$ denotes a regularizer for $\mathbf{a}_i$, and 
$H \, (\mathbf{C}_i, \mathbf{X}, \mathbf{X}_i) \in \mathbb{R}^{M}$ 
denotes the distance/similarity measuring function based on Self-Convolution that also vectorizes the similarity map. 
Both $\rho \, (\mathbf{a}_i)$ and $H \, (\mathbf{C}_i, \mathbf{X}, \mathbf{X}_i)$ vary for different non-local modeling approaches. 

Next, we prove that the proposed Self-Convolution formulation (\ref{eq:unifySelfConv}) can be made equivalent to various commonly used non-local image operations.
To facilitate the analysis, we define a normalization vector $\mathbf{h}_{X_i} = \| \mathbf{X}_i\|_F \mathbf{h}_X$ for each $\mathbf{X}_i$, where $\mathbf{h}_X \in \mathbb{R}^{M}$ has entries $\mathbf{h}_X (j) \triangleq  \| \mathbf{X}_j\|_F$.
We let $\odot$ denote the element-wise product of two vectors or matrices.

\begin{lemma} \label{lemma:convCross}
The normalized cross-correlation in (\ref{equ:cor}) is $ d_{NCC}(\mathbf{X}_j, \, \mathbf{X}_i) = - \begin{bmatrix}\text{vec}(\mathbf{C}_i) \odot \mathbf{g}_{X_i} \end{bmatrix} \, (j)$,
where $\mathbf{g}_{X_i}(j)=1/\mathbf{h}_{X_i}(j)$ $\forall j$.
\end{lemma}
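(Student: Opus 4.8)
The plan is to reduce the claim to a single key fact: that the Self-Convolution output, evaluated at the spatial location corresponding to patch $\mathbf{X}_j$, equals the inner product $\text{vec}(\mathbf{X}_j)^T\text{vec}(\mathbf{X}_i)$. Once this is established, normalizing by $\mathbf{g}_{X_i}$ and flipping the sign will immediately recover $d_{NCC}$, so the bulk of the work is just to interpret (\ref{eq:selfConv}) correctly.

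First I would make precise the correspondence between the 2D output coordinate $(q,r)$ of $\mathbf{C}_i$ and the 1D patch index $j$. The defining sum in (\ref{eq:selfConv}) involves the term $\mathbf{X}(q+l-1,\,r+m-1)$, so as $(l,m)$ ranges over $\{1,\dots,\sqrt{n}\}^2$ this term sweeps out exactly the $\sqrt{n}\times\sqrt{n}$ patch of $\mathbf{X}$ whose top-left corner sits at $(q,r)$. Identifying this patch with $\mathbf{X}_j$ under the same first-mode unfolding convention used to define $\mathrm{vec}(\cdot)$ gives $\mathbf{X}_j(l,m)=\mathbf{X}(q+l-1,\,r+m-1)$, and hence
\[
\mathbf{C}_i(q,r)=\sum_{l=1}^{\sqrt{n}}\sum_{m=1}^{\sqrt{n}}\mathbf{X}_i(l,m)\,\mathbf{X}_j(l,m)=\text{vec}(\mathbf{X}_j)^T\text{vec}(\mathbf{X}_i).
\]
Vectorizing $\mathbf{C}_i$ with the same ordering then yields $\text{vec}(\mathbf{C}_i)(j)=\text{vec}(\mathbf{X}_j)^T\text{vec}(\mathbf{X}_i)$, i.e. the entry indexed by $j$ carries precisely the inner product computed at the location of patch $\mathbf{X}_j$.

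Next I would substitute the normalization vector. By its definition $\mathbf{h}_{X_i}(j)=\|\mathbf{X}_i\|_F\,\mathbf{h}_X(j)=\|\mathbf{X}_i\|_F\,\|\mathbf{X}_j\|_F$, so $\mathbf{g}_{X_i}(j)=1/(\|\mathbf{X}_i\|_F\,\|\mathbf{X}_j\|_F)$. Taking the element-wise product and negating gives
\[
-\big[\text{vec}(\mathbf{C}_i)\odot\mathbf{g}_{X_i}\big](j)=-\frac{\text{vec}(\mathbf{X}_j)^T\text{vec}(\mathbf{X}_i)}{\|\mathbf{X}_i\|_F\,\|\mathbf{X}_j\|_F}=d_{NCC}(\mathbf{X}_j,\mathbf{X}_i),
\]
which is exactly the claimed identity from (\ref{equ:cor}).

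The only real obstacle here is bookkeeping rather than mathematics: I must verify that the vectorization of $\mathbf{C}_i$ and the enumeration of the patches $\{\mathbf{X}_j\}$ use a consistent ordering, so that a fixed index $j$ genuinely points to the same patch in $\text{vec}(\mathbf{C}_i)(j)$, in $\mathbf{h}_X(j)$, and in $d_{NCC}(\mathbf{X}_j,\mathbf{X}_i)$. I would also note the footnote's convention that ``convolution'' here means the signal-processing correlation, which is what makes the sliding inner product (rather than a flipped kernel) come out cleanly; with that convention in place, everything else is a direct substitution.
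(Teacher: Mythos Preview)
Your proposal is correct and follows essentially the same approach as the paper's own proof: both establish that $\text{vec}(\mathbf{C}_i)(j)=\text{vec}(\mathbf{X}_i)^T\text{vec}(\mathbf{X}_j)$ by unpacking (\ref{eq:selfConv}) at the patch location, and then divide by $\mathbf{h}_{X_i}(j)=\|\mathbf{X}_i\|_F\|\mathbf{X}_j\|_F$ and negate to recover $d_{NCC}$. Your added remarks about the indexing consistency and the correlation-vs-convolution convention are reasonable caveats but do not change the argument.
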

\vspace{-0.15in}
% \begin{proof}[Proof Sketch]
% As the ``convolution'' operation used with respect to neural networks is equivalent to the ``correlation'' operation in signal processing, the proposed Self-Convolution metric becomes equivalent to the non-local cross correlation metric, subject to patch-wise normalization using $\mathbf{h}_{X_i}$.
% \vspace{-0.05in}
% \end{proof}

\begin{lemma} \label{lemma:convE}
The Euclidean distance in (\ref{equ:bm}) is $d_E (\mathbf{X}_j, \, \mathbf{X}_i) = \begin{bmatrix} \mathbf{h}_X \odot \mathbf{h}_X + \| \mathbf{X}_i\|_F^2 \mathbf{1} - 2\text{vec}(\mathbf{C}_i) \end{bmatrix} \, (j)$   $\forall j$.
\end{lemma}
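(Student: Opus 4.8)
The plan is to establish the identity entrywise in $j$ by expanding the squared Frobenius distance and matching each of the three resulting terms against the corresponding component of the vector on the right-hand side. I would start directly from $d_E(\mathbf{X}_j, \mathbf{X}_i) = \|\mathbf{X}_j - \mathbf{X}_i\|_F^2$ and use bilinearity of the Frobenius inner product to write
\[
\|\mathbf{X}_j - \mathbf{X}_i\|_F^2 = \|\mathbf{X}_j\|_F^2 - 2\,\text{vec}(\mathbf{X}_j)^T\text{vec}(\mathbf{X}_i) + \|\mathbf{X}_i\|_F^2.
\]
This is just the polarization expansion; it holds for every index $j$ and reduces the lemma to identifying each of the three summands with the $j$-th entry of a vector.

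The second step is to recognize the cross term as the vectorized Self-Convolution map. By the defining formula (\ref{eq:selfConv}), the entry $\mathbf{C}_i(q,r)$ sums $\mathbf{X}_i(l,m)$ against the shifted values $\mathbf{X}(q+l-1,\,r+m-1)$, which is exactly the inner product of $\mathbf{X}_i$ with the patch of $\mathbf{X}$ whose top-left corner sits at $(q,r)$. Indexing that patch as $\mathbf{X}_j$, this yields $\text{vec}(\mathbf{C}_i)(j) = \text{vec}(\mathbf{X}_j)^T\text{vec}(\mathbf{X}_i)$, so the cross term contributes precisely $-2\,\text{vec}(\mathbf{C}_i)(j)$. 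This is the same correspondence already exploited in Lemma~\ref{lemma:convCross}, so it can be invoked rather than re-derived.

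The remaining two terms are immediate from the definitions. The norm $\|\mathbf{X}_j\|_F^2$ equals $\mathbf{h}_X(j)^2 = (\mathbf{h}_X \odot \mathbf{h}_X)(j)$ by the definition $\mathbf{h}_X(j) \triangleq \|\mathbf{X}_j\|_F$, while $\|\mathbf{X}_i\|_F^2$ is constant in $j$ and so equals the $j$-th entry of $\|\mathbf{X}_i\|_F^2\,\mathbf{1}$. Substituting all three identifications back into the expansion and collecting them as a single vector evaluated at $j$ gives exactly $\left[\mathbf{h}_X \odot \mathbf{h}_X + \|\mathbf{X}_i\|_F^2\,\mathbf{1} - 2\,\text{vec}(\mathbf{C}_i)\right](j)$, which is the claimed expression.

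There is no genuinely hard step: the content is a single binomial expansion combined with the observation that Self-Convolution encodes all patch inner products. The only delicate part is bookkeeping—keeping the 2D patch location $(q,r)$ and its linear index $j$ consistent across $\mathbf{C}_i$, $\mathbf{h}_X$, and the patch enumeration $\mathbf{X}_j$, and confirming that the $\sqrt{M}\times\sqrt{M}$ output of the Self-Convolution indexes exactly the $M$ overlapping patches. Fixing this indexing convention once (as is done for Lemma~\ref{lemma:convCross}) removes the only potential source of confusion.
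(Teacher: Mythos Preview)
Your proposal is correct and follows essentially the same route as the paper's proof: expand $\|\mathbf{X}_j-\mathbf{X}_i\|_F^2$ into $\|\mathbf{X}_j\|_F^2+\|\mathbf{X}_i\|_F^2-2\,\text{vec}(\mathbf{X}_i)^T\text{vec}(\mathbf{X}_j)$, identify the cross term with $\text{vec}(\mathbf{C}_i)(j)$ by citing the inner-product identity already established for Lemma~\ref{lemma:convCross}, and read off the two squared-norm terms from the definition of $\mathbf{h}_X$. The paper's presentation is slightly terser but the argument is identical.
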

%\vspace{-0.2in}
% \begin{proof}[Proof Sketch]
% The Euclidean distance is related to cross correlation as $\|\mathbf{X}_j-\mathbf{X}_i\|_F^2 = \|\mathbf{X}_j\|_F^2 + \|\mathbf{X}_i\|_F^2 - 2\text{vec}(\mathbf{X}_j)^T \text{vec}(\mathbf{X}_i)$. Based on the result of Lemma~\ref{lemma:convCross}, it is easy to convert Self-Convolution to Euclidean distance by adding the normalization terms.
% \end{proof}
Lemmas~\ref{lemma:convCross} and \ref{lemma:convE} show the relationship of Self-Convolution metric $H$ to various BM metrics.
We then present our main results on Self-Convolution as the following. 

\begin{figure*}[!t]
\centering 
\begin{minipage}[b]{0.9\textwidth}
\centering 
% \vspace{-0.15in}
\includegraphics[width=1.0\textwidth]{./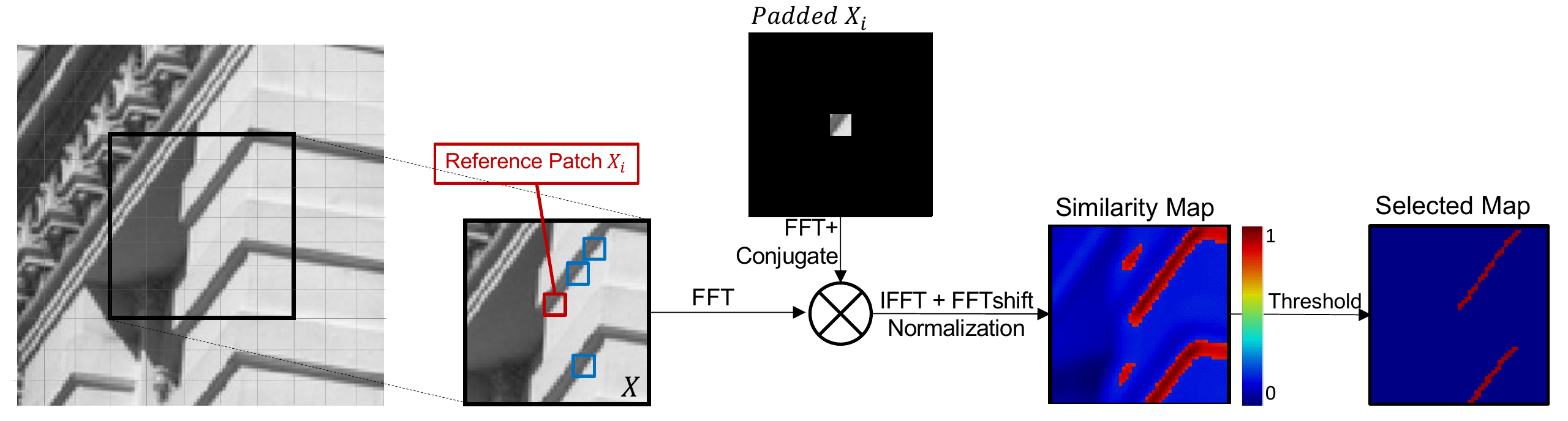}
\caption{Illustration of the proposed Self-Convolution scheme using FFTs, where IFFT denotes inverse FFT. }
\label{fig:self-conv}
\end{minipage}
% \vspace{-0.15in}
\end{figure*}

\begin{proposition} \label{prop:nlm}
Self-Convolution formulation in (\ref{eq:unifySelfConv}) is equivalent to the weight calculation in NLM (\ref{equ:nlm}) if 
$\rho \, (\mathbf{a}_i) = 0$ and  $H \, (\mathbf{C}_i, \mathbf{X}, \mathbf{X}_i) = \frac{1}{\Theta_i} \text{exp}\{(2\text{vec}(\mathbf{C}_i) - \mathbf{h}_X \odot \mathbf{h}_X - \| \mathbf{X}_i\|_F^2 \mathbf{1} )/{b^2} \}$, where $\Theta_i = \sum_{j} \text{exp}(2\text{vec}(\mathbf{C}_i) - \mathbf{h}_X \odot \mathbf{h}_X - \| \mathbf{X}_i\|_F^2 \mathbf{1} )/{b^2}\} \; \forall i$.
Thus, the result of (\ref{eq:unifySelfConv}) becomes the NLM weighting, \ie $\hat{\mathbf{a}}_i = \hat{\mathbf{s}}_i$ coincides with (\ref{equ:nlm}).
\end{proposition}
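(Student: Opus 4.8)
The plan is to exploit the fact that with $\rho(\mathbf{a}_i) = 0$ the optimization in \eqref{eq:unifySelfConv} collapses to an unconstrained quadratic in $\mathbf{a}_i$, and then to invoke Lemma~\ref{lemma:convE} to identify the Self-Convolution expression inside the exponential with the Euclidean patch distance that drives the NLM weights in \eqref{equ:nlm}. Concretely, I would first dispose of the minimization and then reduce the claim to a purely algebraic identity between the prescribed $H$ and the weight vector $\mathbf{s}_i$.

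First I would observe that when the regularizer vanishes, the objective reduces to $\| H(\mathbf{C}_i, \mathbf{X}, \mathbf{X}_i) - \mathbf{a}_i\|_F^2$, a nonnegative quadratic whose unique minimizer is simply $\hat{\mathbf{a}}_i = H(\mathbf{C}_i, \mathbf{X}, \mathbf{X}_i)$. Thus the proposition reduces to showing that the specified $H$, read entrywise over all $M$ candidate locations, coincides with the NLM weight vector. Next I would rewrite the exponent using Lemma~\ref{lemma:convE}: for each index $j$, $\begin{bmatrix} \mathbf{h}_X \odot \mathbf{h}_X + \| \mathbf{X}_i\|_F^2 \mathbf{1} - 2\,\text{vec}(\mathbf{C}_i) \end{bmatrix}(j) = \|\mathbf{X}_j - \mathbf{X}_i\|_F^2$. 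Negating, the argument of the exponential in the prescribed $H$ equals $-\|\mathbf{X}_j - \mathbf{X}_i\|_F^2 / b^2$ at entry $j$. Since the exponential acts element-wise, the $j$-th entry of $H$ is $\Theta_i^{-1}\text{exp}(-\|\mathbf{X}_j - \mathbf{X}_i\|_F^2 / b^2)$, which is precisely $\mathbf{s}_i(j)$. Applying the same substitution to the prescribed $\Theta_i$ yields $\Theta_i = \sum_j \text{exp}(-\|\mathbf{X}_j - \mathbf{X}_i\|_F^2 / b^2)$, matching the NLM normalizer, so the two coincide entrywise and $\hat{\mathbf{a}}_i = \hat{\mathbf{s}}_i$.

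The proof carries no genuine obstacle once Lemma~\ref{lemma:convE} is in hand; the only point requiring care is the bookkeeping. I must confirm that the vectorization $\text{vec}(\mathbf{C}_i)$ and the normalization vectors $\mathbf{h}_X$ are indexed consistently so that Lemma~\ref{lemma:convE} may be applied coordinate by coordinate, and that the element-wise exponential and the $\odot$ products are interpreted as vector operations over the full index range. With that alignment settled, no further estimates are needed and the equivalence follows directly.
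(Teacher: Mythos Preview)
Your proposal is correct and follows essentially the same route as the paper: solve the unconstrained quadratic to obtain $\hat{\mathbf{a}}_i = H(\mathbf{C}_i,\mathbf{X},\mathbf{X}_i)$, then invoke Lemma~\ref{lemma:convE} entrywise to identify the exponent with $-\|\mathbf{X}_j-\mathbf{X}_i\|_F^2/b^2$ and thereby recover $\mathbf{s}_i(j)$. Your explicit verification that the prescribed $\Theta_i$ also reduces to the NLM normalizer is a small addition the paper leaves implicit.
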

% \begin{proof}[Proof Sketch]
% NLM makes use of all non-local patches, thus $\rho \, (\mathbf{a}_i) = 0$. Based on Lemmas~\ref{lemma:convE} and (\ref{equ:nlm}), the weights of NLM can be computed using $H \, (\mathbf{C}_i, \mathbf{X}, \mathbf{X}_i)$ by Self-Convolution.
% \end{proof}

\begin{theorem} \label{theorem:BM}
Self-Convolution formulation in (\ref{eq:unifySelfConv}) equivalent to block matching when 
\begin{enumerate}
    \item $\rho \, (\mathbf{a}_i) = \Phi_K(\mathbf{a}_i)$, where $\Phi_K = 0$ if $\| \mathbf{a}_i \|_0 \leq K$ and $\Phi_K = + \infty$ otherwise.
    \item $H \, (\mathbf{C}_i, \mathbf{X}, \mathbf{X}_i) = \text{vec}(\mathbf{C}_i) - \frac{1}{2} \mathbf{h}_X \odot \mathbf{h}_X$ using Euclidean distance; Or
    \item $H \, (\mathbf{C}_i, \mathbf{X}, \mathbf{X}_i) = \text{vec}(\mathbf{C}_i) \odot \mathbf{g}_{X_i}$ using normalized cross correlation.
\end{enumerate}
In particular, the support of the solution to (\ref{eq:unifySelfConv}), \ie $\text{supp} (\hat{\mathbf{a}}_i)$ is the solution $\hat{S}_i^E$ in (\ref{equ:bm}) or $\hat{S}_i^{NCC}$ in (\ref{equ:cor}).
\end{theorem}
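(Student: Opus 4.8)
The plan is to substitute $\rho(\mathbf{a}_i) = \Phi_K(\mathbf{a}_i)$ into the unified objective (\ref{eq:unifySelfConv}) and recognize the resulting problem as a best-$K$-term (hard-thresholding) approximation of the vector $H(\mathbf{C}_i, \mathbf{X}, \mathbf{X}_i)$. With $\rho = \Phi_K$ the problem reads $\hat{\mathbf{a}}_i = \argmin_{\|\mathbf{a}_i\|_0 \le K} \| H - \mathbf{a}_i \|_F^2$. First I would solve this combinatorial problem in closed form: for any candidate support $T$ with $|T| \le K$, the objective is minimized by setting $(\mathbf{a}_i)(j) = H(j)$ for $j \in T$ and $0$ elsewhere, so the residual equals $\sum_{j \notin T} H(j)^2$. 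Minimizing over $T$ therefore forces $T$ to collect the $K$ entries of $H$ of largest magnitude, i.e. $\text{supp}(\hat{\mathbf{a}}_i)$ is exactly the hard-thresholding support of $H$. This reduces the entire theorem to showing that the top-$K$ entries of the two particular choices of $H$ coincide with the block-matching index sets.

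For the Euclidean case I would invoke Lemma~\ref{lemma:convE}. Substituting $H = \text{vec}(\mathbf{C}_i) - \tfrac{1}{2}\mathbf{h}_X \odot \mathbf{h}_X$ into the expression for $d_E$ gives, entrywise, $H(j) = \tfrac{1}{2}\big(\|\mathbf{X}_i\|_F^2 - d_E(\mathbf{X}_j, \mathbf{X}_i)\big)$, so $H(j)$ is an affine, strictly decreasing function of the Euclidean distance. Hence the indices of the largest entries of $H$ are precisely the indices of the smallest distances, which is the set $\hat{\mathcal{S}}_i^E$ defined in (\ref{equ:bm}). For the normalized-cross-correlation case I would instead use Lemma~\ref{lemma:convCross}, which gives directly $H(j) = [\text{vec}(\mathbf{C}_i) \odot \mathbf{g}_{X_i}](j) = -\, d_{NCC}(\mathbf{X}_j, \mathbf{X}_i)$, so again selecting the largest entries of $H$ selects the smallest $d_{NCC}$, i.e. $\hat{\mathcal{S}}_i^{NCC}$ in (\ref{equ:cor}).

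The delicate step, and the one I expect to be the main obstacle, is reconciling the two senses in which entries are ``largest.'' The $\ell_0$-constrained least-squares minimizer selects entries of $H$ by magnitude $|H(j)|$, whereas block matching selects by signed value (smallest distance $\Leftrightarrow$ largest $H(j)$). These agree only when the relevant entries are non-negative. For the NCC choice this is automatic: for non-negative natural images the cosine similarity lies in $[0,1]$, so $H(j) = -\, d_{NCC} \in [0,1]$ and $|H(j)| = H(j)$, making the magnitude-ranking and the value-ranking identical. For the Euclidean choice I would argue that the patches selected by BM carry the dominant positive responses, since the reference patch itself attains the maximal value $\tfrac{1}{2}\|\mathbf{X}_i\|_F^2$ at zero distance while genuinely dissimilar patches are excluded, so that the top-$K$ magnitudes coincide with the top-$K$ values; this is where I would be most careful to state the mild non-negativity/dominance condition under which the equivalence is exact. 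Once this is settled, combining the closed-form thresholding support with the monotone maps from the two lemmas yields $\text{supp}(\hat{\mathbf{a}}_i) = \hat{\mathcal{S}}_i^E$ (resp. $\hat{\mathcal{S}}_i^{NCC}$), completing the proof.
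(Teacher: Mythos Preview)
Your approach mirrors the paper's proof exactly: reduce (\ref{eq:unifySelfConv}) with $\rho=\Phi_K$ to the $\ell_0$-constrained least-squares problem, identify its solution as hard thresholding of $H$, and then invoke Lemmas~\ref{lemma:convCross} and~\ref{lemma:convE} to translate the top-$K$ support into the BM index sets. Notably, the magnitude-versus-value issue you flag as the ``delicate step'' is one the paper's own proof does not address: it simply asserts that the top-$K$ magnitude indices of $\mathbf{b}_i$ coincide with $\argmax_{|\mathcal{S}_i|\le K}\sum_{j\in\mathcal{S}_i}\mathbf{b}_i(j)$ without justification, so your caution there is well placed and in fact more careful than the published argument.
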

% \begin{proof}[Proof Sketch]
% The penalty term $\Phi_K(\mathbf{a}_i)$ is a barrier function, equivalent to imposing an $\ell_0$ sparsity constraint, \ie $\| \mathbf{a}_i \|_0 \leq K$. 
% Sparse coding keeps the $K$ largest elements in $H \, (\mathbf{C}_i, \mathbf{X}, \mathbf{X}_i)$ by projection onto the $\ell_0$ unit ball, which is equivalent to selecting the $K$ most similar patches.
% Therefore, the locations of the non-zero elements in $\hat{\mathbf{a}}_i$ correspond to the BM set in (\ref{equ:bm}) or (\ref{equ:cor}).
% \end{proof}
% %Lemmas~\ref{lemma:convCross} and \ref{lemma:convE} show the relationship of Self-Convolution metric $H$ to various BM metrics. 
Proposition~\ref{prop:nlm} and Theorem~\ref{theorem:BM} show the equivalence of the Self-Convolution formulation (\ref{eq:unifySelfConv}) to NLM and BM, and how (\ref{eq:unifySelfConv}) generalizes the popular non-local image modeling schemes using Self-Convolution. For the above results, we provide the full proofs in Sec.~\ref{sec:proof}.

\subsection{Efficient Self-Convolution via FFT}

Directly computing Self-Convolution in the spatial domain using (\ref{eq:selfConv}) is as expensive as the conventional BM or NLM operations. 
Instead, we propose to apply the Fast Fourier Transform (FFT) to accelerate Self-Convolution operations.
%The Fast Fourier Transform (FFT) can be used to accelerate self-convolution that has the same form as Algorithm \ref{equ:self-conv}. 
Let $\mathbf{F}\in \mathbb{C}^{N\times N}$ and $\mathbf{F}^H$ denote the 2D Fourier matrix and the inverse FFT (IFFT) matrix, respectively, and $\mathbf{x} = \text{vec}(\mathbf{X}) \in \mathbb{R}^N$.
Besides, we use $P(\cdot) :\mathbb{R}^{\sqrt{n} \times \sqrt{n}} \mapsto \mathbb{R}^{\sqrt{N} \times \sqrt{N}}$ to denote a zero-padding operator, \ie it pads zeros around a small patch to match the image size. 
We denote $\mathbf{x}_i = \text{vec}(P ( \mathbf{X}_i) ) \in \mathbb{R}^N$ as the vectorized image patch after padding, and $\bar{\mathbf{u}}$ denotes the complex conjugate of a vector $\mathbf{u}$. 

\begin{proposition} \label{prop:fft}
Self-Convolution using (\ref{eq:selfConv}) is equivalent to the following FFT-based operations (we use $\bar{\mathbf{u}}$ to denote the complex conjugate of $\mathbf{u}$):
\begin{equation} \label{eqn:fftConv}
    \mathbf{C}_i = \mathbf{F}^H(\mathbf{F}\mathbf{x} \odot  \overline{\mathbf{F}\mathbf{x}_i})\;.
\end{equation}
\end{proposition}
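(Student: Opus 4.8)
The plan is to recognize that the spatial operation in (\ref{eq:selfConv}) is exactly a (linear) cross-correlation between the reference patch $\mathbf{X}_i$ and the image $\mathbf{X}$, and then to invoke the correlation counterpart of the convolution theorem recalled earlier. The only difference between convolution and the correlation used here (following the neural-network convention noted in the footnote) is a reflection of one operand, and in the Fourier domain a reflection of a real signal is a complex conjugation. This is precisely why the target identity (\ref{eqn:fftConv}) carries the conjugate $\overline{\mathbf{F}\mathbf{x}_i}$ rather than $\mathbf{F}\mathbf{x}_i$: it turns $\mathbf{f}\ast\mathbf{g}=\mathbf{F}^{H}(\mathbf{F}\mathbf{f}\cdot\mathbf{F}\mathbf{g})$ into a correlation identity. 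Pinning down this conjugate and the domain on which equality holds is the whole content of the proof.

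Concretely, I would proceed in four steps. First, rewrite (\ref{eq:selfConv}) in $0$-indexed form as $\mathbf{C}_i(q,r)=\sum_{l,m}\mathbf{X}_i(l,m)\,\mathbf{X}(q{+}l,r{+}m)$, exposing it as the template-matching cross-correlation of $\mathbf{X}_i$ against $\mathbf{X}$. Second, establish the one-dimensional circular-correlation identity $\sum_{n}a(n)\,b(n{+}k)=\mathbf{F}^{H}\big(\overline{\mathbf{F}a}\odot\mathbf{F}b\big)(k)$ for real signals $a,b$, which follows in one line from the definition of the DFT via the index change $p=n+k$ (equivalently, by reflecting $a$ and applying the stated convolution theorem, using that a real reflection conjugates the spectrum); with $a=\mathbf{x}_i$, $b=\mathbf{x}$ and commutativity of $\odot$ this matches the right-hand side of (\ref{eqn:fftConv}). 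Third, and this is the crux, show that the \emph{circular} correlation produced by the FFT coincides with the desired \emph{linear} correlation $\mathbf{C}_i$ on the valid output window. Here the zero-padding operator $P$ is essential: since $\mathbf{x}_i=\mathrm{vec}(P(\mathbf{X}_i))$ is supported only on a $\sqrt{n}\times\sqrt{n}$ block, for every shift $(q,r)$ in the valid $\sqrt{M}\times\sqrt{M}$ region the read indices $q{+}l{-}1\le\sqrt{N}$ and $r{+}m{-}1\le\sqrt{N}$ stay inside $\mathbf{X}$, so no wrap-around occurs and the circular and linear sums agree termwise; the remaining entries of the length-$N$ output correspond to partially-overlapping shifts and are simply discarded when reading off the $\sqrt{M}\times\sqrt{M}$ map. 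Fourth, lift the argument from $1$D to $2$D by using that the $2$D DFT matrix $\mathbf{F}$ factors as a Kronecker product of $1$D DFT matrices compatible with $\mathrm{vec}(\cdot)$, so the separable correlation theorem applies along each mode and the Hadamard product $\mathbf{F}\mathbf{x}\odot\overline{\mathbf{F}\mathbf{x}_i}$ realizes the $2$D correlation.

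The main obstacle is the linear-versus-circular subtlety in the third step, together with its index bookkeeping: the FFT intrinsically computes a size-$\sqrt{N}\times\sqrt{N}$ circular correlation, whereas $\mathbf{C}_i\in\Rmb^{\sqrt{M}\times\sqrt{M}}$ with $\sqrt{M}=\sqrt{N}-\sqrt{n}+1$. I would therefore make precise (i) where $P$ places the patch (e.g.\ in a corner) so that the wrap-free shifts form a contiguous block, and (ii) that outside that block the extra entries either vanish or are truncated, so that (\ref{eqn:fftConv}) is understood as equality on the valid window rather than as an equality of full $N$-vectors. Once the correlation theorem and this padding alignment are fixed, the remaining verification is a routine substitution.
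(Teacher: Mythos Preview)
The paper does not actually supply a proof of this proposition; it is stated in Section~III.C and left implicit, with the only justification being the convolution theorem recalled earlier in Section~II.B. Your proposal is correct and fills in precisely the details the paper omits: the correlation theorem with the conjugate $\overline{\mathbf{F}\mathbf{x}_i}$ arising from the neural-network ``convolution'' convention flagged in the footnote, the role of the zero-padding $P$ in guaranteeing that the circular correlation computed by the FFT coincides with the linear (valid) correlation on the $\sqrt{M}\times\sqrt{M}$ output window, and the extension to 2D via separability of the DFT. Your third step---the circular-versus-linear bookkeeping---is the only genuinely nontrivial point, and it is not addressed anywhere in the paper; in that sense your argument is more complete than what the paper provides.
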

% \begin{proof}[Proof Sketch]
% The equivalence is obvious using the classic convolution theorem~\cite{katznelson2004introduction}.
% \vspace{-0.1in}
% \end{proof}

Comparing to (\ref{eq:selfConv}), the FFT-based Self-Convolution (\ref{eqn:fftConv}) only computes an image-level element-wise product once, which is more efficient.
Furthermore, as an additional advantage to the computation in (\ref{eq:selfConv}), we pre-computed $\mathbf{F}\mathbf{x}$ as it is fixed for calculating different $\mathbf{C}_i$ corresponding to different reference patches $\mathbf{X}_i$ within the same image.
Fig.~\ref{fig:self-conv} shows an example of the FFT-based Self-Convolution: the magnitudes of $\mathbf{C}_i$'s after normalization (Theorem~\ref{theorem:BM}) reflect the similarity between $\mathbf{X}_i$ and all non-local patches.
%Now we can prove the equivalent of (\ref{eq:unifySelfConv}) to the aforementioned non-local modeling schemes (\ref{equ:nlm}) to (\ref{equ:cor}).
Based on Proposition~\ref{prop:fft}, the non-local modeling framework, which is equivalent to (\ref{eq:unifySelfConv}), using the FFT-based Self-Convolution for $\forall i$ can be formulated as
\begin{align}
\hat{\mathbf{a}}_i =  \mathop{\arg\min}_{\mathbf{a}_i} \left\| H \, ( \mathbf{F}^H(\mathbf{F}\mathbf{x} \odot  \overline{\mathbf{F}\mathbf{x}_i}), \mathbf{X}, \mathbf{X}_i ) \, - \, \mathbf{a}_i\right\|_{F}^{2} &+\rho \, (\mathbf{a}_i).
\label{equ: fft self-convn}
\end{align}
% to make (3) include the penalty form, by using a barrier function which is equivalent to constraint (KNN)
% Use a proximal mapping as a general solution (closed-form) to (3), using the penalty form

\subsection{Computational Cost Analysis} \label{sec:computation}
%The conventional BM, self-convolution mechanism, and the FFT-based self-convolution are computed using (\ref{equ:bm}), (\ref{eq:unifySelfConv}), and (\ref{equ: fft self-convn}), respectively. 
%We now briefly discuss the computational costs of each approach 
%For an $\sqrt{N}\times \sqrt{N}$ image (or the search window size), and overlapping $\sqrt{n}\times \sqrt{n}$ patches, 
%where typically $n\ll N$, hence 
We assume overlapping image patches are extracted with a patch stride of $1 \times 1$. Thus, the total number of patches is approximately equal to the number of image pixels $N$. Let $n$ denote the patch size (i.e., number of pixels in a patch) and let $N_s$ denote the search window size. The computational cost of the conventional BM algorithm scales as $\mathcal{O}(nNN_s)$, while the direct convolution also requires $\mathcal{O}(nNN_s)$ operations. The proposed FFT-based Self-Convolution includes the following costs: $N$ FFTs, one for each search window, that scale as $\mathcal{O}(NN_slogN_s)$; $N$ FFTs, one for each of the zero-padded (sparse) image patches ($\mathbf{x}_i$), which require $\mathcal{O}(NnlogN_s)$ operations ~\cite{indyk2014sample}; $N$ pointwise vector products in the frequency domain requiring $\mathcal{O}(NN_s)$ multiplies; and  $N$ inverse FFTs (one for each $\mathbf{x}_i$) which scale as $\mathcal{O}(NN_slogN_s)$. 
The overall complexity is thus $\mathcal{O}\left( NN_slogN_s+(nlogN_s+N_s+N_slogN_s)N\right)$, which scales as $\mathcal{O}\left((2logN_s + 1)NN_s\right)$. Table~\ref{tab:complexity} summarizes the computational costs %comparison 
of different methods. 
As typically $2logN_s + 1 < n$, thus the FFT-based Self-Convolution is more efficient.
For example, for a single-channel image, typical $N_s$ values range from $20 \times 20$ to $30 \times 30$ and $n$ typically ranges from $6\times6$ to $10\times10$. Thus, $2logN_s+1$ is around $20$, which is much smaller than the patch sizes $36$ or $100$.
The gap is larger when the algorithm works with 3D or higher dimensional patches. For example, in video or multi-modality data processing problems, $n$ can be $6 \times 6 \times 4$ or larger, and the FFT-based Self-Convolution has much lower computational cost than other schemes.

\begin{table}[!t]
\centering
\footnotesize
\setlength{\tabcolsep}{0.2em}
\adjustbox{width=1.\linewidth}{
    \begin{tabular}{l|c}
        \toprule
        Operation Type &  Computational Cost \\ \midrule
        Block Matching & $\mathcal{O}(nNN_s)$\\
        Self-Conv (\ref{eq:selfConv}) & $\mathcal{O}(nNN_s)$ \\
       \rowcolor{Gray} Self-Conv (FFT) & $\mathcal{O}\left(N N_s log N_s  + (nlogN_s+N_s+N_slogN_s)N\right)$ \\ 
        \bottomrule
    \end{tabular}
}
\vspace{0.1in}
   \caption{Computational costs for various methods (over all reference patches). Here, $n$ denotes the number of pixels in a patch, $N_s$ denotes the number of patches within the search window, and $N$ denotes the number of patches within the whole image.}\label{tab:complexity}
\end{table}

\subsection{Proofs for Unified Non-Local Modeling via Self-Convolution} \label{sec:proof}
\setcounter{lemma}{0}
\setcounter{proposition}{0}
\setcounter{theorem}{0}
We present the detailed proofs regarding the proposed Self-Convolution formulation (\ref{eq:unifySelfConv}), where the Self-Convolution $\mathbf{C}_i$ is computed as in (\ref{eq:selfConv}).
%Here, $\mathbf{X}_i \in \mathbb{R}^{\sqrt{n} \times \sqrt{n}}$ denotes the reference patch extracted from an image $\mathbf{X} \in \mathbb{R}^{\sqrt{N} \times \sqrt{N}}$.

First of all, we prove Lemma~\ref{lemma:convCross_app} which shows the connection between (\ref{eq:selfConv}) and the normalized cross correlation~\cite{wei2008fast}. 
% The latter for the case of (non-negative) natural images leads to
% \begin{equation}
%     %\hat{\mathcal{S}}^{NCC}_i  = \argmin_{ \mathcal{S}_i:| \mathcal{S}_i | \leq K } \sum _{j \in \mathcal{S}_i} \, 
%     d_{NCC} (\mathbf{X}_j, \, \mathbf{X}_i)  =  
%     %\argmin_{ \mathcal{S}_i:| \mathcal{S}_i | \leq K } \, - \, \sum _{j \in \mathcal{S}_i}
%     -\frac{\text{vec}(\mathbf{X}_j)^T \text{vec}(\mathbf{X}_i)\, }{\| \mathbf{X}_j \|_F \, \| \mathbf{X}_i\|_F } \;,
%     %\;\;\;\; \forall i.
%     \label{equ:corMetric_app}
% \end{equation}
% where $\mathbf{h}_{X_i} = \| \mathbf{X}_i\|_F \mathbf{h}_X$ denotes a normalization vector  for each $\mathbf{X}_i$, and $\mathbf{h}_X (j) \triangleq  \| \mathbf{X}_j\|_F$.
%to facilitate the analysis, we define  Then, 

\begin{lemma} \label{lemma:convCross_app}
\vspace{0.1in}
The normalized cross-correlation in (\ref{equ:cor}) is $ d_{NCC}(\mathbf{X}_j, \, \mathbf{X}_i) = - \begin{bmatrix}\text{vec}(\mathbf{C}_i) \odot \mathbf{g}_{X_i} \end{bmatrix} \, (j)$,
where $\mathbf{g}_{X_i}(j)=1/\mathbf{h}_{X_i}(j)$ $\forall j$.
\end{lemma}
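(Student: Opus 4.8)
The plan is to unfold the definition of the Self-Convolution in (\ref{eq:selfConv}), recognize each entry of $\mathbf{C}_i$ as a Frobenius inner product between the reference patch and a shifted image patch, and then substitute this directly into the definition of the normalized cross correlation in (\ref{equ:cor}). The argument is essentially a matter of identifying the right quantities and matching normalization factors.

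First I would observe that, by (\ref{eq:selfConv}), the entry $\mathbf{C}_i(q,r)=\sum_{l,m}\mathbf{X}_i(l,m)\,\mathbf{X}(q+l-1,r+m-1)$ is precisely the Frobenius inner product between $\mathbf{X}_i$ and the $\sqrt{n}\times\sqrt{n}$ patch of $\mathbf{X}$ whose top-left corner sits at $(q,r)$. Denoting that patch by $\mathbf{X}_j$, where the 2D location $(q,r)$ is identified with the 1D index $j$ under $\text{vec}(\cdot)$, this reads $\mathbf{C}_i(q,r)=\langle \mathbf{X}_i,\mathbf{X}_j\rangle_F=\text{vec}(\mathbf{X}_j)^T\text{vec}(\mathbf{X}_i)$, so that $[\text{vec}(\mathbf{C}_i)](j)=\text{vec}(\mathbf{X}_j)^T\text{vec}(\mathbf{X}_i)$.

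Next I would expand the normalization factor. By definition $\mathbf{h}_X(j)=\|\mathbf{X}_j\|_F$ and $\mathbf{h}_{X_i}=\|\mathbf{X}_i\|_F\,\mathbf{h}_X$, hence $\mathbf{g}_{X_i}(j)=1/\mathbf{h}_{X_i}(j)=1/(\|\mathbf{X}_i\|_F\,\|\mathbf{X}_j\|_F)$. Taking the entrywise product then gives $[\text{vec}(\mathbf{C}_i)\odot\mathbf{g}_{X_i}](j)=\text{vec}(\mathbf{X}_j)^T\text{vec}(\mathbf{X}_i)/(\|\mathbf{X}_i\|_F\,\|\mathbf{X}_j\|_F)$, which is exactly the (positive) summand inside (\ref{equ:cor}). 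Negating both sides yields the claimed identity $d_{NCC}(\mathbf{X}_j,\mathbf{X}_i)=-[\text{vec}(\mathbf{C}_i)\odot\mathbf{g}_{X_i}](j)$, and since this holds for every $j$ the lemma follows.

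The calculation itself is routine; the only point requiring care is the bookkeeping between the 2D correlation-map index $(q,r)$ and the 1D patch index $j$ induced by $\text{vec}(\cdot)$. I expect this indexing correspondence — verifying that the patch picked out by $[\text{vec}(\mathbf{C}_i)](j)$ is exactly the patch $\mathbf{X}_j$ whose NCC distance to $\mathbf{X}_i$ appears in (\ref{equ:cor}) — to be the main and essentially the only obstacle, since the algebra of the inner product and the normalization is immediate once this identification is fixed.
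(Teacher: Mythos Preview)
Your proposal is correct and follows essentially the same route as the paper: identify $\text{vec}(\mathbf{C}_i)(j)$ with the inner product $\text{vec}(\mathbf{X}_i)^T\text{vec}(\mathbf{X}_j)$ via the 2D-to-1D index correspondence, then divide by $\mathbf{h}_{X_i}(j)=\|\mathbf{X}_i\|_F\|\mathbf{X}_j\|_F$ and negate. The only cosmetic difference is that the paper writes out the intermediate step $-\text{vec}(\mathbf{C}_i)(j)/\mathbf{h}_{X_i}(j)$ before rewriting it as an element-wise product with $\mathbf{g}_{X_i}$.
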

\vspace{-0.15in}
\begin{proof}[Proof of Lemma~\ref{lemma:convCross_app}]
%The ``convolution'' operation used in neural network is equivalent to the ``correlation'' operation in signal processing. 
By converting the 2D coordinates $(q, r)$ used in (\ref{eq:selfConv}) into 1D as $(j)$, and vectorizing the reference patch $\mathbf{X}_i$ and the Self-Convolution matrix $\mathbf{C}_i$, (\ref{eq:selfConv}) is equivalent to
%The exact form of the Self-Convolution (\ref{eq:selfConv}) is equivalent to
% \begin{equation}
%     \mathbf{C}_i (j, \, k) = \sum_{l = 1}^{\sqrt{n}} \, \sum_{m = 1}^{\sqrt{n}} \, \mathbf{X}_i (l, \, m) \, \mathbf{X}_{j,k} (l, \, m) =  \text{vec}(\mathbf{X}_i)^T \text{vec}(\mathbf{X}_{j,k})\;.
% \end{equation}
\begin{align}\label{eq:vecCor_app}
    \nonumber \text{vec}(\mathbf{C}_i) (j) &= \sum_{l = 1}^{n} \text{vec}(\mathbf{X}_i) (l) \, \times \,  \text{vec}(\mathbf{X}_j) (l) \\
    & =  \text{vec}(\mathbf{X}_i)^T \text{vec}(\mathbf{X}_{j})\;.
\end{align}
where $\mathbf{X}_{j}$ denotes the $j$-th patch (with its top left corner at pixel $(q,r)$ as in (\ref{eq:selfConv})) that is extracted from $\mathbf{X}$.
% When $\mathbf{C}_i$ is vectorized, its coordinates are converted from 2D to 1D as
% \begin{equation} \label{eq:vecCor}
%     \text{vec}(\mathbf{C}_i)(j) =  \text{vec}(\mathbf{X}_i)^T \text{vec}(\mathbf{X}_{j})\;
% \end{equation}
Therefore, the normalized cross-correlation defined by (\ref{equ:cor}) is
\begin{align}
    \nonumber d_{NCC} (\mathbf{X}_j, \, \mathbf{X}_i)  & =      -\frac{\text{vec}(\mathbf{X}_j)^T \text{vec}(\mathbf{X}_i)\, }{\| \mathbf{X}_j \|_F \, \| \mathbf{X}_i\|_F } \; 
    %= \frac{\text{vec}(\mathbf{C}_i)(j)}{\| \mathbf{X}_j \|_F \, \| \mathbf{X}_i\|_F }
    = - \frac{\text{vec}(\mathbf{C}_i)(j)}{\mathbf{h}_{X_i}(j)}\\
    & = - \begin{bmatrix}\text{vec}(\mathbf{C}_i) \odot \mathbf{g}_{X_i} \end{bmatrix} \, (j) \;,
\end{align}
where $\mathbf{g}_{X_i}(j)=1/\mathbf{h}_{X_i}(j)$ $\forall j$.
\vspace{0.05in}
\end{proof}

Next, we prove the Lemma~\ref{lemma:convE_app} which shows the connection between (\ref{eq:selfConv}) and the Euclidean distance defined as
\begin{equation}
   d_{E} (\mathbf{X}_j, \, \mathbf{X}_i)  =  \|\mathbf{X}_j-\mathbf{X}_i\|_F^2\;.
    \label{equ:bmMetric_app}
\end{equation}

\begin{lemma} \label{lemma:convE_app}
\vspace{0.1in}
The Euclidean distance in (\ref{equ:bmMetric_app}) is $d_E (\mathbf{X}_j, \, \mathbf{X}_i) = \begin{bmatrix} \mathbf{h}_X \odot \mathbf{h}_X + \| \mathbf{X}_i\|_F^2 \mathbf{1} - 2\text{vec}(\mathbf{C}_i) \end{bmatrix} \, (j)$   $\forall j$\;.
\end{lemma}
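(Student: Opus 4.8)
The plan is to reduce the claim to the elementary expansion of a squared Frobenius norm and then match each resulting term against the vector identity for Self-Convolution already established in the proof of Lemma~\ref{lemma:convCross_app}. First I would expand the metric (\ref{equ:bmMetric_app}) using bilinearity of the Frobenius inner product,
\begin{equation}
\|\mathbf{X}_j - \mathbf{X}_i\|_F^2 = \|\mathbf{X}_j\|_F^2 - 2\,\text{vec}(\mathbf{X}_j)^T \text{vec}(\mathbf{X}_i) + \|\mathbf{X}_i\|_F^2 ,
\end{equation}
which isolates a term depending only on the candidate patch $\mathbf{X}_j$, a cross term, and a term depending only on the reference patch $\mathbf{X}_i$.

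Next I would identify each piece with the corresponding entry of the vectors on the right-hand side of the statement. The definition $\mathbf{h}_X(j) \triangleq \|\mathbf{X}_j\|_F$ gives $[\mathbf{h}_X \odot \mathbf{h}_X](j) = \|\mathbf{X}_j\|_F^2$ for the first term. For the cross term I would invoke identity (\ref{eq:vecCor_app}) from the proof of Lemma~\ref{lemma:convCross_app}, namely $\text{vec}(\mathbf{C}_i)(j) = \text{vec}(\mathbf{X}_i)^T \text{vec}(\mathbf{X}_j)$, so that $-2\,\text{vec}(\mathbf{X}_j)^T \text{vec}(\mathbf{X}_i) = -2\,\text{vec}(\mathbf{C}_i)(j)$. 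The last term $\|\mathbf{X}_i\|_F^2$ does not depend on $j$, so over all indices it is the constant vector $\|\mathbf{X}_i\|_F^2 \mathbf{1}$, and reading the three contributions off at index $(j)$ yields exactly the claimed vector expression.

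Because the argument is just the polarization of a squared norm combined with the already-proven Self-Convolution identity, I expect no substantive obstacle. The only point requiring care is bookkeeping: keeping straight which terms vary with the patch index $j$—the norm of $\mathbf{X}_j$ and the cross term—and which are constant in $j$—the reference-patch norm, which is why it must be written with the all-ones factor $\mathbf{1}$ so that the whole right-hand side reads as a single vector evaluated at $(j)$.
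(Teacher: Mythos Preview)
Your proposal is correct and follows essentially the same approach as the paper: expand the squared Frobenius norm, identify $\|\mathbf{X}_j\|_F^2$ with $[\mathbf{h}_X \odot \mathbf{h}_X](j)$, invoke (\ref{eq:vecCor_app}) from the proof of Lemma~\ref{lemma:convCross_app} for the cross term, and absorb the $j$-independent reference-patch norm into $\|\mathbf{X}_i\|_F^2 \mathbf{1}$. The paper's proof is identical in structure and in the identities used.
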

%\vspace{-0.2in}
\begin{proof}[Proof of Lemma~\ref{lemma:convE_app}]
The Euclidean distance defined by (\ref{equ:bmMetric_app}) is
\begin{align} \label{eq:EucConv_app}
    \nonumber & d_{E} (\mathbf{X}_j, \, \mathbf{X}_i) =  \|\mathbf{X}_j-\mathbf{X}_i\|_F^2 \\
    & = \|\mathbf{X}_j\|_F^2 + \|\mathbf{X}_i\|_F^2 - 2\text{vec}(\mathbf{X}_i)^T \text{vec}(\mathbf{X}_j) \;.
\end{align}
Here, $\|\mathbf{X}_j\|_F^2 = \begin{bmatrix} \mathbf{h}_X \odot \mathbf{h}_X \end{bmatrix}(j)$.
Besides, according to (\ref{eq:vecCor_app}) from the proof for Lemma~\ref{lemma:convCross_app}, we have $2\text{vec}(\mathbf{X}_i)^T \text{vec}(\mathbf{X}_j) = 2\text{vec}(\mathbf{C}_i)(j)$.
Therefore, the Euclidean distance is
\begin{align}
    \nonumber d_{E} (\mathbf{X}_j, \, \mathbf{X}_i)  = & \begin{bmatrix} \mathbf{h}_X \odot \mathbf{h}_X \end{bmatrix}(j) + \|\mathbf{X}_i\|_F^2 - 2\text{vec}(\mathbf{C}_i)(j) \\
    = & \begin{bmatrix} \mathbf{h}_X \odot \mathbf{h}_X + \| \mathbf{X}_i\|_F^2 \mathbf{1} - 2\text{vec}(\mathbf{C}_i) \end{bmatrix} \, (j)\; .
\end{align}
% \vspace{0.05in}
\end{proof}

We now prove Proposition~\ref{prop:nlm_app} which shows the equivalence between the Self-Convolution Formulation (\ref{eq:unifySelfConv}) and the non-local means (NLM) algorithm, which computes the weighting coefficients with respect to other patches as follows:
\begin{align}
    \nonumber \mathbf{s}_i (j) = \Theta_i^{-1} \text{exp}( -\left \|  \mathbf{X}_j-\mathbf{X}_i \right \|_{F}^2 / b^2 )\;, \;\; \\
    \Theta_i = \sum_{j} \text{exp}(-\left \|  \mathbf{X}_j-\mathbf{X}_i \right \|_{F}^2 / b^2 ) \;\;\; \forall i\, ,
    \label{equ:nlm_app}
\end{align}
where $b$ is a hyper-parameter scaling the distance metric.

\begin{proposition} \label{prop:nlm_app}
Self-Convolution formulation in (\ref{eq:unifySelfConv}) is equivalent to the weight calculation in NLM (\ref{equ:nlm_app}) if 
$\rho \, (\mathbf{a}_i) = 0$ and  $H \, (\mathbf{C}_i, \mathbf{X}, \mathbf{X}_i) = \frac{1}{\Theta_i} \text{exp}\{(2\text{vec}(\mathbf{C}_i) - \mathbf{h}_X \odot \mathbf{h}_X - \| \mathbf{X}_i\|_F^2 \mathbf{1} )/{b^2} \}$\;.
Thus, the result of (\ref{eq:unifySelfConv}) becomes the NLM weighting, \ie $\hat{\mathbf{a}}_i = \hat{\mathbf{s}}_i$ coincides with (\ref{equ:nlm_app}).
\end{proposition}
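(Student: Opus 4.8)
The plan is to exploit the fact that the penalty is switched off, so the optimization (\ref{eq:unifySelfConv}) collapses to a trivial least-squares fit, and then to identify the resulting minimizer with the NLM weights entrywise using Lemma~\ref{lemma:convE_app}.

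First I would observe that with $\rho(\mathbf{a}_i) = 0$ the objective in (\ref{eq:unifySelfConv}) is simply $\|H(\mathbf{C}_i, \mathbf{X}, \mathbf{X}_i) - \mathbf{a}_i\|_F^2$, a nonnegative quadratic in $\mathbf{a}_i$ that attains its unique global minimum (of value zero) at $\hat{\mathbf{a}}_i = H(\mathbf{C}_i, \mathbf{X}, \mathbf{X}_i)$. Hence the proof reduces entirely to checking that this particular choice of $H$ coincides, entry by entry, with the NLM weight vector $\mathbf{s}_i$ from (\ref{equ:nlm_app}).

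The key step is to rewrite the exponent. By Lemma~\ref{lemma:convE_app}, for each index $j$ we have $[2\text{vec}(\mathbf{C}_i) - \mathbf{h}_X \odot \mathbf{h}_X - \|\mathbf{X}_i\|_F^2 \mathbf{1}](j) = -d_E(\mathbf{X}_j, \mathbf{X}_i) = -\|\mathbf{X}_j - \mathbf{X}_i\|_F^2$. Substituting this into the definition of $H$ gives $H(\mathbf{C}_i, \mathbf{X}, \mathbf{X}_i)(j) = \Theta_i^{-1} \text{exp}(-\|\mathbf{X}_j - \mathbf{X}_i\|_F^2/b^2)$, which is exactly $\mathbf{s}_i(j)$ in (\ref{equ:nlm_app}). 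I would then confirm the normalizer is consistent: applying the same Lemma~\ref{lemma:convE_app} substitution inside $\Theta_i = \sum_j \text{exp}\{ [2\text{vec}(\mathbf{C}_i) - \mathbf{h}_X \odot \mathbf{h}_X - \|\mathbf{X}_i\|_F^2 \mathbf{1}](j)/b^2 \}$ turns it into $\sum_j \text{exp}(-\|\mathbf{X}_j - \mathbf{X}_i\|_F^2/b^2)$, matching the NLM normalization $\Theta_i$ in (\ref{equ:nlm_app}) term by term. Combining these identities yields $\hat{\mathbf{a}}_i = \hat{\mathbf{s}}_i$ for every $i$.

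There is essentially no analytic obstacle here; the content is bookkeeping rather than estimation, and the least-squares reduction is immediate. The only point requiring genuine care is the per-entry alignment of the three objects involved — the vectorized Self-Convolution map $\text{vec}(\mathbf{C}_i)$, the normalization vector $\mathbf{h}_X$, and the patch index $j$ — so that the vectorized exponent argument of $H$ matches the scalar distance $\|\mathbf{X}_j - \mathbf{X}_i\|_F^2$ for the correct patch $\mathbf{X}_j$. This indexing consistency is precisely what Lemma~\ref{lemma:convE_app}, and the underlying vectorization identity (\ref{eq:vecCor_app}), supply; once they are invoked the claim follows directly.
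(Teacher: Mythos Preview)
Your proposal is correct and follows essentially the same argument as the paper: both reduce (\ref{eq:unifySelfConv}) with $\rho=0$ to the trivial minimizer $\hat{\mathbf{a}}_i = H(\mathbf{C}_i,\mathbf{X},\mathbf{X}_i)$, then invoke Lemma~\ref{lemma:convE_app} to identify the exponent entrywise with $-\|\mathbf{X}_j-\mathbf{X}_i\|_F^2$. Your additional check that the normalizer $\Theta_i$ is consistent is a slight elaboration beyond the paper's proof, but the route is identical.
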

\begin{proof}[Proof of Proposition~\ref{prop:nlm_app}]
The Euclidean distance in the exponent in (\ref{equ:nlm_app}) can be rewritten using Lemma~\ref{lemma:convE_app}. Then, with $\rho \, (\mathbf{a}_i) = 0$ and  $H \, (\mathbf{C}_i, \mathbf{X}, \mathbf{X}_i) = \frac{1}{\Theta_i} \text{exp}\{(2\text{vec}(\mathbf{C}_i) - \mathbf{h}_X \odot \mathbf{h}_X - \| \mathbf{X}_i\|_F^2 \mathbf{1} )/{b^2} \}$ ($\text{exp}\{\cdot\}$ denotes element-wise exponential function), the Self-Convolution formulation (\ref{eq:unifySelfConv}) reduces to
$ \hat{\mathbf{a}}_i \, =
     \argmin_{\mathbf{a}_i} \,  \| \frac{1}{\Theta_i} \text{exp}\{(2\text{vec}(\mathbf{C}_i) - \mathbf{h}_X \odot \mathbf{h}_X - \| \mathbf{X}_i\|_F^2 \mathbf{1} )/{b^2} \} - \mathbf{a}_i\|_F^2 \; \forall i\;,
$
which has an exact solution as  $\hat{\mathbf{a}}_i = \frac{1}{\Theta_i} \text{exp}\{(2\text{vec}(\mathbf{C}_i) - \mathbf{h}_X \odot \mathbf{h}_X - \| \mathbf{X}_i\|_F^2 \mathbf{1} )/{b^2} \}$\;.
Based on Lemma~\ref{lemma:convE_app},  $\hat{\mathbf{a}}_i(j) = \frac{1}{\Theta_i} \text{exp}( -\left \|  \mathbf{X}_j-\mathbf{X}_i \right \|_{F}^2 / b^2 )$, which completes the proof.
\end{proof}

Finally, we prove Theorem~\ref{theorem:BM_app} that shows the equivalence between the Self-Convolution Formulation (\ref{eq:unifySelfConv}) and the block matching (BM) algorithm.
The BM algorithm using Euclidean distance metric is defined in (\ref{equ:bm}). 
Besides, the BM algorithm using normalized cross correlation as the metric is defined as in (\ref{equ:cor}).

\begin{theorem} \label{theorem:BM_app}
Self-Convolution formulation is equivalent to block matching when 
\begin{enumerate}
    \item $\rho \, (\mathbf{a}_i) = \Phi_K(\mathbf{a}_i)$, where $\Phi_K(\mathbf{a}_i) = 0$ if $\| \mathbf{a}_i \|_0 \leq K$ and $\Phi_K(\mathbf{a}_i) = + \infty$ otherwise.
    \item $H \, (\mathbf{C}_i, \mathbf{X}, \mathbf{X}_i) = \text{vec}(\mathbf{C}_i) - \frac{1}{2} \mathbf{h}_X \odot \mathbf{h}_X$ using Euclidean distance; Or
    \item $H \, (\mathbf{C}_i, \mathbf{X}, \mathbf{X}_i) = \text{vec}(\mathbf{C}_i) \odot \mathbf{g}_{X_i}$ using normalized cross correlation.
\end{enumerate}
Thus, the support of the solution to (\ref{eq:unifySelfConv}), \ie $\text{supp} (\hat{\mathbf{a}}_i)$ is the solution $\hat{S}_i^E$ in (\ref{equ:bm}) or $\hat{S}_i^{NCC}$ in (\ref{equ:cor}).
\end{theorem}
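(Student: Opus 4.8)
The plan is to exploit the fact that, once $\rho(\mathbf{a}_i) = \Phi_K(\mathbf{a}_i)$, the map $H(\mathbf{C}_i, \mathbf{X}, \mathbf{X}_i)$ is a \emph{fixed} vector and (\ref{eq:unifySelfConv}) collapses to a best $K$-term approximation problem, $\hat{\mathbf{a}}_i = \argmin_{\|\mathbf{a}_i\|_0 \le K} \|H - \mathbf{a}_i\|_F^2$. First I would solve this inner problem in closed form: for any candidate support $S$ with $|S| \le K$, the optimal $\mathbf{a}_i$ supported on $S$ sets $\mathbf{a}_i(j) = H(j)$ for $j \in S$ and $\mathbf{a}_i(j)=0$ otherwise, producing residual $\sum_{j \notin S} H(j)^2 = \|H\|_F^2 - \sum_{j \in S} H(j)^2$. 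Minimizing over $S$ therefore maximizes $\sum_{j \in S} H(j)^2$, so $\text{supp}(\hat{\mathbf{a}}_i)$ is exactly the set of $K$ indices at which $|H(j)|$ is largest (hard thresholding). This concentrates the entire combinatorial content of the theorem into a single statement about the ordering of the entries of $H$.

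The second step is to re-express $H$ through the block-matching metrics using the two lemmas already established. In the Euclidean case, $H(j) = \text{vec}(\mathbf{C}_i)(j) - \tfrac{1}{2}[\mathbf{h}_X \odot \mathbf{h}_X](j)$, and Lemma~\ref{lemma:convE_app} gives $d_E(\mathbf{X}_j, \mathbf{X}_i) = \|\mathbf{X}_i\|_F^2 - 2H(j)$, so $H(j)$ is a strictly decreasing affine function of $d_E(\mathbf{X}_j,\mathbf{X}_i)$ whose additive constant $\|\mathbf{X}_i\|_F^2$ is independent of $j$. In the normalized cross-correlation case, Lemma~\ref{lemma:convCross_app} gives directly $H(j) = [\text{vec}(\mathbf{C}_i) \odot \mathbf{g}_{X_i}](j) = -\,d_{NCC}(\mathbf{X}_j,\mathbf{X}_i)$. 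Hence, in both cases, selecting the $K$ indices of largest \emph{value} $H(j)$ is identical to selecting the $K$ patches of smallest distance, which is precisely the definition of $\hat{S}_i^E$ in (\ref{equ:bm}) and $\hat{S}_i^{NCC}$ in (\ref{equ:cor}).

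The main obstacle is reconciling the two selection rules: the inner problem of the first paragraph selects by largest \emph{magnitude} $|H(j)|$, whereas block matching selects by largest \emph{signed value} $H(j)$. For normalized cross-correlation this gap closes automatically, since on non-negative natural images $\text{vec}(\mathbf{C}_i)(j) = \text{vec}(\mathbf{X}_i)^T \text{vec}(\mathbf{X}_j) \ge 0$ and $\mathbf{h}_{X_i}(j) > 0$ force $H(j) \ge 0$, so the magnitude and value orderings coincide and $\text{supp}(\hat{\mathbf{a}}_i) = \hat{S}_i^{NCC}$ follows at once. For the Euclidean metric, however, $H(j)$ can be negative, and a highly dissimilar, large-norm patch can in principle carry a larger $|H(j)|$ than a genuine match; a naive constant shift of $H$ does not repair this, because it alters the $\ell_2$ hard-thresholding solution. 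I therefore expect to need an additional hypothesis under which the top-$K$ candidates have non-negative $H$-values dominating the rest in magnitude — for instance restricting the operator to the practical search window $\Omega_i$, or assuming comparable patch energies so that every block-matching candidate yields $H(j)\ge 0$ — at which point magnitude ordering agrees with value ordering and $\text{supp}(\hat{\mathbf{a}}_i) = \hat{S}_i^E$. Pinning down this non-negativity condition, and checking it is consistent with the regime in which block matching is actually applied, is the step I would treat with the most care.
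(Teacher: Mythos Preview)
Your approach is essentially the same as the paper's: reduce (\ref{eq:unifySelfConv}) with $\rho = \Phi_K$ to a projection onto the $\ell_0$ ball, identify $\text{supp}(\hat{\mathbf{a}}_i)$ with the top-$K$ indices of $\mathbf{b}_i \triangleq H$, and then invoke Lemmas~\ref{lemma:convE_app} and~\ref{lemma:convCross_app} to rewrite $\mathbf{b}_i(j)$ as an affine (decreasing) function of $d_E$ or as $-d_{NCC}$, so that the support coincides with the block-matching solution.

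Your concern about magnitude versus signed value is well placed, and the paper does \emph{not} resolve it: the paper states that the $\ell_0$-ball projection keeps ``the top-$K$ elements of largest magnitude in $\mathbf{b}_i$'' and then, in the very next line, silently identifies that set with $\argmax_{|\mathcal{S}_i|\le K}\sum_{j\in\mathcal{S}_i}\mathbf{b}_i(j)$, i.e., the top-$K$ by \emph{signed} value. No justification for this identification is offered. Your observation that it can fail for the Euclidean choice of $H$ (a dissimilar, high-norm patch can have $\mathbf{b}_i(j)$ large and negative) is correct, and your NCC argument via non-negativity of natural images is exactly what closes the gap in that case. So you have not missed an idea present in the paper; rather, you have flagged a step the paper leaves unjustified, and your proposed remedy --- a non-negativity or comparable-energy condition on the candidate patches within the search window --- is the natural hypothesis needed to make the Euclidean case rigorous.
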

\begin{proof}[Proof of Theorem~\ref{theorem:BM_app}]
First, when $\rho \, (\mathbf{a}_i) = \Phi_K(\mathbf{a}_i)$ and $H \, (\mathbf{C}_i, \mathbf{X}, \mathbf{X}_i) = \text{vec}(\mathbf{C}_i) - \frac{1}{2} \mathbf{h}_X \odot \mathbf{h}_X$, the Self-Convolution formulation is reduced to
\begin{equation}
    \hat{\mathbf{a}}_i \, = \,  \mathop{\argmin}_{\mathbf{a}_i} \,  \| \text{vec}(\mathbf{C}_i) - \frac{1}{2} \mathbf{h}_X \odot \mathbf{h}_X - \mathbf{a}_i\|_F^2 + \Phi_K(\mathbf{a}_i) \;\;\; \forall i\;.
\end{equation}
As the penalty term $\Phi_K(\mathbf{a}_i)$ is a barrier function, it is equivalent to imposing an $\ell_0$ sparsity constraint. Therefore, the equivalent constrained optimization problem is
\begin{align} \label{eq:BMsparsity_app}
    \nonumber \hat{\mathbf{a}}_i \, = & \,  \mathop{\argmin}_{\mathbf{a}_i} \,  \| \text{vec}(\mathbf{C}_i) - \frac{1}{2} \mathbf{h}_X \odot \mathbf{h}_X - \mathbf{a}_i\|_F^2\\
    & \;\;\;\;\;\;\;\;\;\st\;\| \mathbf{a}_i \|_0 \leq K \;\;\forall i\;.
\end{align}
Denote $\mathbf{b}_i \triangleq \text{vec}(\mathbf{C}_i) - \frac{1}{2} \mathbf{h}_X \odot \mathbf{h}_X$. The exact solution to (\ref{eq:BMsparsity_app}) is obtained by projecting $\mathbf{b}_i$ onto the $\ell_0$ ball. Thus the optimal $\hat{\mathbf{a}}_i$ can be computed as
\begin{equation} \label{sol:balproj_app}
 \hat{\mathbf{a}}_i (j) = \left \{ \begin{matrix}
 \mathbf{b}_i(j) & , \;\; j \in \Omega_K^i \\
 0  & ,\;\; j \notin \Omega_K^i
\end{matrix}\right .
\end{equation}
Here the set $\Omega_K^i = \text{supp}(\hat{\mathbf{a}}_i)$ indexes the top-K elements of largest magnitude in $\mathbf{b}_i$, which are identical to the solution to (\ref{eq:suppOmega_app}) below.
\begin{equation}\label{eq:suppOmega_app}
    \Omega_K^i = \mathop{\argmax}_{ \mathcal{S}_i:| \mathcal{S}_i | \leq K} \sum _{j \in \mathcal{S}_i} \mathbf{b}_i(j)
\end{equation}
Furthermore, based on Lemma~\ref{lemma:convE_app}, we have
\begin{align} \label{eq:bEuclid_app}
    \nonumber \mathbf{b}_i(j) = & \text{vec}(\mathbf{C}_i) - \frac{1}{2}\mathbf{h}_X \odot \mathbf{h}_X \\
    = & - d_{E} (\mathbf{X}_j, \, \mathbf{X}_i) + \frac{1}{2}\| \mathbf{X}_i\|_F^2 \mathbf{1}\;.
\end{align}
Substituting (\ref{eq:bEuclid_app}) to (\ref{eq:suppOmega_app}), we prove the equivalence using \textbf{Euclidean distance} as
\begin{align}
    \nonumber  \text{supp}(\hat{\mathbf{a}}_i) & = \mathop{\argmax}_{ \mathcal{S}_i:| \mathcal{S}_i | \leq K} \sum _{j \in \mathcal{S}_i} \left \{ - d_{E} (\mathbf{X}_j, \, \mathbf{X}_i) + \frac{1}{2}\| \mathbf{X}_i\|_F^2 \mathbf{1} \right \} \\
    & = \mathop{\argmin}_{ \mathcal{S}_i:| \mathcal{S}_i | \leq K} \sum _{j \in \mathcal{S}_i} d_{E} (\mathbf{X}_j, \, \mathbf{X}_i)\;.
\end{align}

Next, when $\rho \, (\mathbf{a}_i) = \Phi_K(\mathbf{a}_i)$ and $H \, (\mathbf{C}_i, \mathbf{X}, \mathbf{X}_i) = \text{vec}(\mathbf{C}_i) \odot \mathbf{g}_{X_i}$, the Self-Convolution formulation (\ref{eq:unifySelfConv}) reduces to the following form
\begin{align} \label{eq:BMcorr_app}
    \nonumber \hat{\mathbf{a}}_i \, = & \,  \mathop{\arg\min}_{\mathbf{a}_i} \,  \| \text{vec}(\mathbf{C}_i) \odot \mathbf{g}_{X_i} - \mathbf{a}_i\|_F^2 \\
    & \;\;\;\;\;\;\;\;\;\st\;\| \mathbf{a}_i \|_0 \leq K \;\;\forall i\;.
\end{align}
Similarly as in (\ref{sol:balproj_app}), there is an exact solution to (\ref{eq:BMcorr_app}) by setting $\mathbf{b}_i \triangleq \text{vec}(\mathbf{C}_i) \odot \mathbf{g}_{X_i}$ in (\ref{sol:balproj_app}).
Based on Lemma~\ref{lemma:convCross_app} and (\ref{eq:suppOmega_app}), we prove the equivalence using \textbf{normalized cross correlation} as
\begin{align}
\nonumber \text{supp}(\hat{\mathbf{a}}_i) =& \mathop{\argmax}_{ \mathcal{S}_i:| \mathcal{S}_i | \leq K} \sum _{j \in \mathcal{S}_i} \begin{bmatrix}\text{vec}(\mathbf{C}_i) \odot \mathbf{g}_{X_i} \end{bmatrix} \, (j) \\
    =& \mathop{\argmin}_{ \mathcal{S}_i:| \mathcal{S}_i | \leq K} \sum _{j \in \mathcal{S}_i} - \begin{bmatrix}\text{vec}(\mathbf{C}_i) \odot \mathbf{g}_{X_i} \end{bmatrix} \, (j)\;.
\end{align}

%The penalty term $\Phi_K(\text{vec}(\mathbf{a}_i))$ is a barrier function, equivalent to imposing an $\ell_0$ sparsity constraint, \ie $\| \text{vec}(\mathbf{a}_i) \|_0 \leq K$. 
%Sparse coding keeps the $K$ largest elements in $H \, (\mathbf{C}_i, \mathbf{X}, \mathbf{X}_i)$ by projection onto the $\ell_0$ unit ball, which is equivalent to selecting the $K$ most similar patches.
%Therefore, the locations of the non-zero elements in $\hat{\mathbf{a}}_i$ correspond to the BM set in (\ref{equ:bm}) or (\ref{equ:cor}).
\end{proof}

\section{Self-Convolution for Multi-Modality Image Denoising}

\begin{figure*}[!t]
\centering 
\begin{minipage}[b]{1.0\textwidth}
\centering 
\includegraphics[width=1.\textwidth]{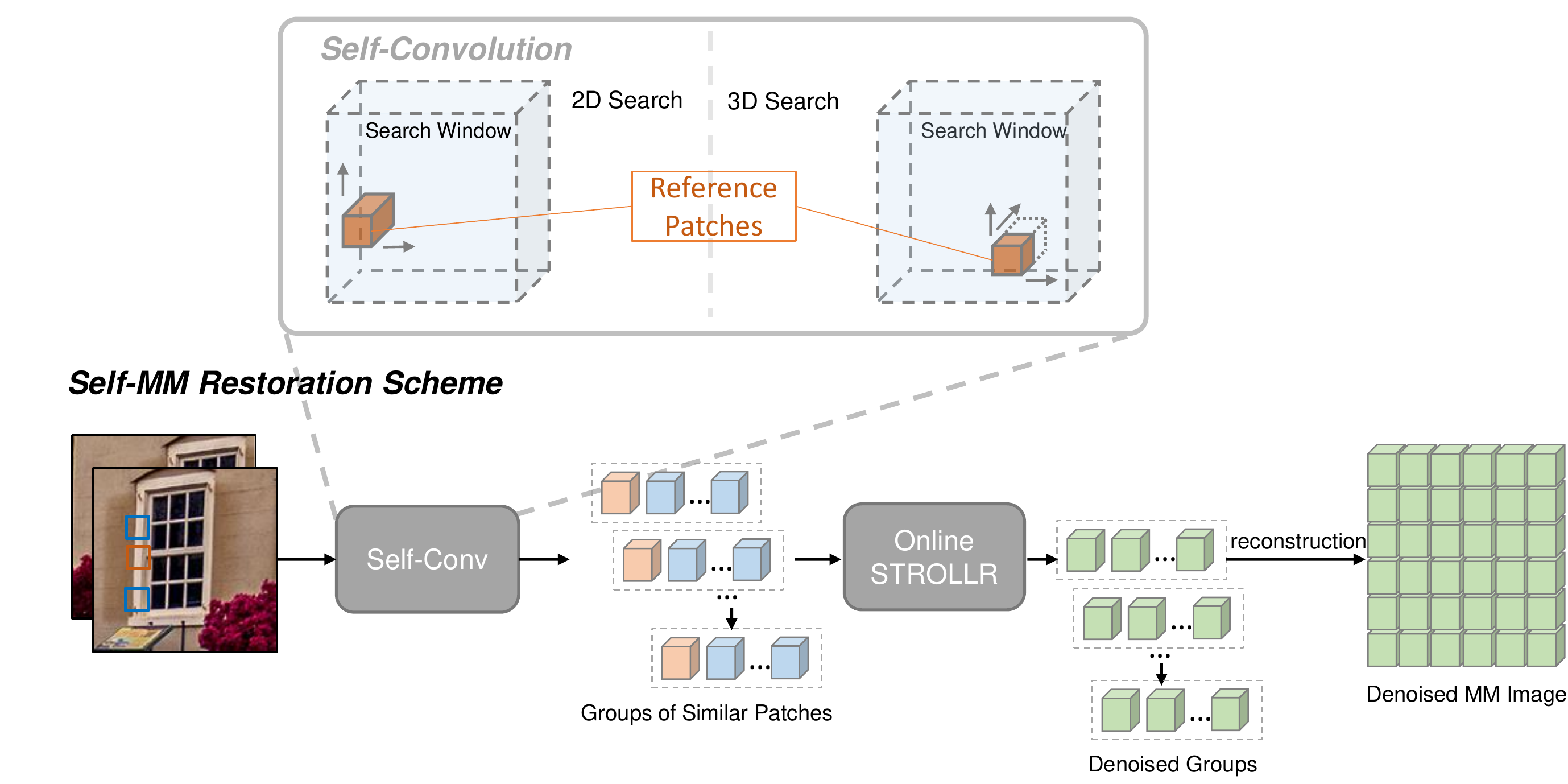}
\caption{Illustration of the online Self-MM scheme for multi-modality (MM) image restoration, which can be divided into three steps: (1) self-convolution for grouping similar patches, including 2D search and 3D search, (2) online STROLLR denoising, (3) image reconstruction.}
\label{fig:self-MM}
\end{minipage}
\end{figure*}
%In this section, we first extend the recently proposed STROLLR algorithm~\cite{wen2017strollr2d,wen2020image} to an online learning scheme.
%Based on the online STROLLR, we propose a multi-modality image denoising scheme, called Online Self-MM, with an efficient algorithm using FFT-based Self-Convolution.

We propose a multi-modality image denoising scheme, called online Self-MM, which has an efficient algorithm using FFT-based Self-Convolution.
Fig.~\ref{fig:self-MM} shows the algorithm pipeline, which contains three steps, namely (1) Self-Convolution, (2) Online Denoising, and (3) Image Reconstruction. 

\textbf{Self-Convolution}: A multi-modality image can be represented as a tensor $\mathcal{X} \in \mathbb{R}^{\sqrt{N} \times \sqrt{N} \times p}$, and the proposed algorithm aims to restore its noisy version $\mathcal{Y}$.
We apply \textbf{2D} Self-Convolution to $\mathcal{Y}$, \ie 2D convolution along each spatial channel followed by summing the result over channels, working with the overlapping 3D patches $\mathcal{Y}_i \in \mathbb{R}^{\sqrt{n} \times \sqrt{n} \times p}$ which are extracted from $\mathcal{Y}$. 
The noisy tensor and padded 3D patch are vectorized as $\mathbf{y}=\text{vecT}(\mathcal{Y})$ and $\mathbf{y}_i = \text{vecT}(P(\mathcal{Y}_i)) \in \mathbb{R}^{Np}$, and the \textbf{2D} Self-Convolution is obtained by solving the following problem: 
\begin{align}
\label{equ:self-convMM}
\nonumber \hat{\mathbf{a}}_i = & \mathop{\argmin}_{\mathbf{a}_i} \left\|  S(\mathbf{F}^H(\mathbf{F}\mathbf{y} \odot  \overline{\mathbf{F}\mathbf{y}_i})) - 0.5 \, \mathbf{h}_\mathcal{Y} \odot \mathbf{h}_\mathcal{Y}  \, - \, \mathbf{a}_i\right\|_{F}^{2}  \\
 & \;\;\;\;\;\;\;\;\;\;\;\;\;\;\;\;\;\;\;\;\;\;\; \st \; \left \| \mathbf{a}_i \right \|_0 \leq K \;\;\; \forall i\;,
\end{align}
where $\mathbf{F}$ denotes 2D (spatial) channel-wise FFT for a 3D input and $S(\cdot)$ performs summation along the channel dimension. 
% Here we impose a sparsity constraint, which is equivalent to applying a barrier function $\rho (\mathbf{a}_i)$ in (\ref{equ: fft self-convn}).
The solution to (\ref{equ:self-convMM}) is obtained by retaining only the K-largest magnitude elements in $S(\mathbf{F}^H(\mathbf{F}\mathbf{y} \odot  \mathbf{F}\mathbf{y}_i)) - 0.5 \, \mathbf{h}_\mathcal{Y} \odot \mathbf{h}_\mathcal{Y}$, and setting the others to zero~\cite{octobos}. Alternatively, one can also apply \textbf{3D} Self-Convolution to $\mathcal{Y}$, by working with patches $\mathcal{Y}_i \in \mathbb{R}^{\sqrt{n} \times \sqrt{n} \times d}$ where $d < p$. In this case, Self-Convolution allows to search across channels by applying a 3D FFT (and omitting the $S(\cdot)$ function) in (\ref{equ:self-convMM}). We construct the groups of similar patches $\mathbf{Y}_i$ using $\hat{\mathcal{S}}_i = \text{supp} (\hat{\mathbf{a}}_i)$~\footnote{The first element is reference patch since it has a distance of zero from itself.} as
\begin{equation}
    \mathbf{Y}_i = \begin{bmatrix}  \mathbf{y}_{ \hat{\mathcal{S}}_i (1) }, \mathbf{y}_{\hat{\mathcal{S}}_i (2)}, ... , \mathbf{y}_{\hat{\mathcal{S}}_i (K)} \end{bmatrix} \;\;\; \forall i\;.
\end{equation}

\begin{algorithm}[!t]
\caption{The online Self-MM Multi-modality Image recovery algorithm framework.}
\begin{algorithmic}
\REQUIRE The observed noisy image $\mathcal{Y}$. \\
\textbf{Self-Convolution:}
From $\mathcal{Y}$, obtain the similar patches sequence $\{\mathbf{Y}_t\}_{t=1}^N$.
\\
\textbf{Initialize:} $\hat{\mathbf{W}}_{0} = \mathbf{W}_{0}$ (\eg 4D DCT), $\mathbf{V}_0=0$, and $\mathbf{Z}_t=\mathbf{Y}_t \; \forall t$. \\
\textbf{For $\;t = 1, 2,\ldots, N$ Repeat}
\begin{enumerate}
\item \textbf{Low-rank Approximation:} 
\begin{enumerate}
\item Full SVD: $ \mathbf{\Lambda}_t \operatorname{diag}(\omega_t) \mathbf{\Delta}_{t}^{T} \leftarrow \text{SVD}(\mathbf{Z}_{t})$.
\item Update $\hat{\mathbf{D}}_{t} = \mathbf{\Lambda}_{t} \operatorname{diag}\{H_\theta (\omega_{t})\} \mathbf{\Delta}_{t}^{T}$.
\end{enumerate}
\item \textbf{Sparse Coding:} 
$\hat{\alpha}_t = H_\beta(\hat{\mathbf{W}}_{t-1} \mathbf{z}_t)$.
\item \textbf{Online Transform Update:} 
\begin{enumerate}
\item $\mathbf{V}_t=(1-{t}^{-1})\mathbf{V}_{t-1}+t^{-1}  \mathbf{z}_t \hat{\alpha}_t^T$.
\item Full SVD:  $\mathbf{\Phi}_{t} \mathbf{\Sigma}_{t} \mathbf{\Psi}_{t}^{T} \leftarrow \text{SVD}(\mathbf{V}_t)$.
\item Update $\hat{\mathbf{W}}_t= \mathbf{\Psi}_{t}\mathbf{\Phi}_{t}^{T}$
\end{enumerate}
\item \textbf{3D Denoised Patch Reconstruction:} 
\begin{enumerate}
\item Sparse coding: $\hat{\alpha}_i = H_\beta(\hat{\mathbf{W}}_{i} \mathbf{z}_i)$.
\item Reconstruct $\hat{\mathbf{Z}}_t$ by least square solution as $\hat{\mathbf{Z}}_t = \begin{bmatrix} \mathbf{Y}_t + \gamma_l \hat{\mathbf{D}}_t + \gamma_s \text{vec}^{-1} (\hat{\mathbf{W}}_t^T \hat{\alpha}_t) \end{bmatrix}  / (1 + \gamma_l + \gamma_s)$.
\item Tensorize $\mathcal{\hat{Z}}_t = \text{vecT}^{-1}(\mathbf{\hat{Z}}_t)$.
\end{enumerate}
\item \textbf{Aggregation:} Aggregate patches $\{\hat{\mathcal{Z}_t}\}$ at corresponding locations to $\mathcal{\hat{Y}}_t$.
\end{enumerate}
\textbf{End} \\
%\vspace{-0.15in}
\textbf{Output:} The reconstructed (denoised) image $\mathcal{\hat{Y}}_N$.  \\
\end{algorithmic}
\end{algorithm}

\textbf{Online Denoising}: 
Recent works~\cite{wen2017strollr2d,wen2020image} proposed an effective non-local image model, called STROLLR, using \textit{transform learning}.
However, the batch STROLLR algorithm involves inefficient BM, which limits the algorithm's scalability to handle high-dimensional or multi-modality images.
For large-scale images, we need to process a large number of $\mathbf{Y}_i$'s. Thus, we propose to extend the batch STROLLR to online learning, and apply online STROLLR to process $\mathbf{Y}_\tau$ sequentially for $\tau = 1, 2, ... t, ..., N$. When denoising $\mathbf{Y}_t$, the online algorithm solves the following problem:
\begin{align} \label{equ:onlineStr}
%\hspace{-0.04in}
%\nonumber \{ \hat{\mathbf{Z}}_t, & \hat{\mathbf{W}}_t, \hat{\alpha}_t, \hat{\mathbf{D}}_t \} = \argmin_{ \{\mathbf{Z}_t, \mathbf{W}_t \in \mathbb{U}, \alpha _t, \mathbf{D}_t\}} \left \| \mathbf{Y}_t - \mathbf{Z}_t  \right \|_F^2 + \\
%\sum_{\tau=1}^{t} & \left \{ \frac{\gamma_l}{t} \begin{bmatrix} \left \| \mathbf{Z}_\tau - \mathbf{D}_\tau \right \|_F^2 + \theta ^2 \operatorname{rank}(\mathbf{D}_\tau) \end{bmatrix} + \frac{\gamma_s}{t} \begin{bmatrix} \left \| \mathbf{W}_t \mathbf{z}_\tau - \alpha_\tau \right \|^2_2 + \rho ^2 \left \| \alpha _\beta \right \|_0 \end{bmatrix} \right \} \;\; \forall t.
\nonumber \{ \hat{\mathbf{Z}}_t, \hat{\mathbf{W}}_t, \hat{\alpha}_t, \hat{\mathbf{D}}_t \} & =  \mathop{\argmin}_{ \{\mathbf{Z}_t, \mathbf{W}_t, \alpha _t, \mathbf{D}_t\}}  \left \| \mathbf{Y}_t - \mathbf{Z}_t  \right \|_F^2 \\
\nonumber & \;\;\; +  \gamma_l \left \{ \left \| \mathbf{Z}_t - \mathbf{D}_t \right \|_F^2 + \theta ^2 \operatorname{rank}(\mathbf{D}_t) \right \} \\
\nonumber +& \gamma_s \sum_{\tau=1}^{t} \left \{ \left \| \mathbf{W}_t \mathbf{z}_\tau - \alpha_\tau \right \|^2_2 + \beta^2 \left \| \alpha_\tau \right \|_0  \right \} \\
& \;\;\;\quad \st \;\;\mathbf{W}_t^T \mathbf{W}_t = \mathbf{I}\; .
\end{align}
Here $\mathbf{Z}_t$ is the denoised estimate of $\mathbf{Y}_t$ with $\mathbf{z}_{\tau} = \text{vec}(\mathbf{Z}_{\tau})$, and $\gamma_l$, $\gamma_s$, $\theta$, and $\beta$ are non-negative hyper-parameters. 
In (\ref{equ:onlineStr}), $\mathbf{D}_t$ and $\alpha_t$ denote the low-rank approximation and sparse codes for $\mathbf{Z}_t$, respectively. 
The transform $\mathbf{W}_t \in \mathbb{R}^{nK \times nK}$ is learned online by adapting to all $\{ \mathbf{z}_\tau \}_{\tau = 1}^t$. 
%provided that $\mathbf{W}_t \in \mathbb{U}$ is unitary, \ie $\mathbf{W}_t^T \mathbf{W}_t = \mathbf{I}_{np}$. 
We propose to solve (\ref{equ:onlineStr}) using an efficient block coordinate descent algorithm~\cite{wen2020image} with initialization for variables based on most recent estimates, which is similar to the batch STROLLR algorithm~\cite{wen2020image}.
First, fixing $\mathbf{Z}_t=\mathbf{Y}_t$, we solve for $\mathbf{D}_t$ in (\ref{equ:onlineStr}) by finding the best low-rank approximation to $\mathbf{Z}_t$ as follows:
\begin{equation}
    \hat{\mathbf{D}}_t=\underset{\mathbf{D}_{t}}{\argmin}\left\|\mathbf{Z}_{t}-\mathbf{D}_t\right\|_{F}^{2}+\theta^{2} \operatorname{rank}\left(\mathbf{D}_t\right) \;.
    \label{equ:low-rank approximation}
\end{equation}
Let $\mathbf{\mathbf{Z}}_t = \mathbf{\Lambda}_t \operatorname{diag}(\omega_t) \mathbf{\Delta}_{t}^{T}$ be a  full singular value decomposition (SVD), where the diagonal vector $\omega_t$ contains the singular values. Then the exact solution to (\ref{equ:low-rank approximation}) is $\hat{\mathbf{D}}_t = \mathbf{\Lambda}_{t} \operatorname{diag}\{H_\theta (\omega_{t})\} \mathbf{\Delta}_{t}^{T}$,
where $H_\theta(\cdot)$ denotes the hard thresholding operator at threshold $\theta$.
Next, with $\mathbf{W}_t$ fixed to $\hat{\mathbf{W}}_{t-1}$ and fixed $\mathbf{z}_t$, we solve the following sparse coding problem for $\hat{\alpha}_t$:
\begin{equation}
\hat{\alpha}_t = \underset{\alpha_{t}}{\operatorname{argmin}}\left\|\mathbf{W}_t \mathbf{z}_t - \alpha_t \right\|_{2}^{2} + \beta^2 \left\|\alpha_t\right\|_{0}\;,
\end{equation}
which is known as the transform model sparse coding problem, and the exact solution $\hat{\alpha}_t = H_\beta(\mathbf{W}_t \mathbf{z}_t)$. Next, the transform matrix is updated in (\ref{equ:onlineStr}) by solving the following problem:
\begin{align} \label{eq:Wupdate}
     \hat{\mathbf{W}}_t = \mathop{\argmin}_{\mathbf{W}_t} \sum_{\tau=1}^t \left\| \mathbf{W}_t \mathbf{z}_\tau - \hat{\alpha}_\tau \right\|_2^2 
    \quad \st\; \mathbf{W}_t^T\mathbf{W}_t=\mathbf{I} \;.
\end{align}
Denoting the full SVD of $\mathbf{V}_t \triangleq \sum_{\tau =1}^t \mathbf{z}_\tau \hat{\alpha}_\tau^{T}$ as $\mathbf{\Phi}_{t} \mathbf{\Sigma}_{t} \mathbf{\Psi}_{t}^{T}$, the exact solution to (\ref{eq:Wupdate}) is obtained as $\hat{\mathbf{W}}_t= \mathbf{\Psi}_{t}\mathbf{\Phi}_{t}^{T}$~\cite{ravishankar2013closed}. 
For the transform update step (\ref{eq:Wupdate}), the SVD of the matrix   $\mathbf{V}_t=(1-{t}^{-1})\mathbf{V}_{t-1}+t^{-1}  \mathbf{z}_t \hat{\alpha}_t^T$ can be efficiently updated (without storing all the past estimated variables) via a fast rank-1 update to the SVD of $\mathbf{V}_{t-1}$~\cite{wen2017joint}.
Finally, with fixed $\hat{\mathbf{D}}_t$, $\hat{\alpha}_t $, and $\hat{\mathbf{W}}_t$, the denoised patch group $\mathbf{Z}_t$ is updated by solving
\begin{align} \label{eq:Zupdate}
   \nonumber \hat{\mathbf{Z}}_t = \mathop{\argmin}_{\mathbf{Z}_t} \left \| \mathbf{Y}_t - \mathbf{Z}_t  \right \|_F^2 +  \gamma_l \left \| \mathbf{Z}_t - \hat{\mathbf{D}}_t \right \|_F^2 \\
   + \gamma_s  \left \| \hat{\mathbf{W}}_t \mathbf{z}_t - \hat{\alpha}_t \right \|^2_2 \;\;,
\end{align}
which has a least-square solution $\hat{\mathbf{Z}}_t = \begin{bmatrix} \mathbf{Y}_t + \gamma_l \hat{\mathbf{D}}_t + \gamma_s \text{vec}^{-1} (\hat{\mathbf{W}}_t^T \hat{\alpha}_t) \end{bmatrix}  / (1 + \gamma_l + \gamma_s)$.

The proposed online algorithm denoises each $\mathbf{Y}_\tau$ sequentially, and updates the sparsifying transform every time a new $\mathbf{Y}_\tau$ arrives.
Alternatively, similar to \cite{wen2017joint}, the strictly online denoising can be modified into a mini-batch algorithm based on (\ref{equ:onlineStr}) to process a set of $\mathbf{Y}_\tau$'s each time, and only update the transform once per mini-batch.
Compared to the strictly online algorithm, the mini-batch learning scheme requires higher memory usage, but enjoys a lower computational cost.

\textbf{Image Reconstruction}: 
The online denoising algorithm sequentially outputs each patch group $\hat{\mathbf{Z}}_i = \begin{bmatrix}  \hat{\mathbf{Z}}_{i}(\cdot,1), \hat{\mathbf{Z}}_{i}(\cdot,2), ... , \hat{\mathbf{Z}}_i(\cdot,K) \end{bmatrix}$, which is the denoised estimate of the input $\hat{\mathbf{Y}}_i = \begin{bmatrix}  \mathbf{y}_{ \hat{\mathcal{S}}_i (1) }, \mathbf{y}_{\hat{\mathcal{S}}_i (2)}, ... , \mathbf{y}_{\hat{\mathcal{S}}_i (K)} \end{bmatrix}$.
Correspondingly, each $\hat{\mathbf{Z}}_{i}(\cdot,j)$ is the denoised estimate of $\mathbf{y}_{\hat{\mathcal{S}}_i (j)}$, and these denoised patches are aggregated back at their respective locations in the high-dimensional image.
%which is located at the $\hat{\mathcal{S}}_i (j)$ patch of the tensor.

For the online Self-MM scheme, we first tensorize each $\hat{\mathbf{Z}}_{i}(\cdot,j)$ as $\hat{\mathcal{Z}}_{i}(\cdot,j) = \text{vecT}^{-1} (\hat{\mathbf{Z}}_{i}(\cdot,j)) \in \mathbb{R}^{\sqrt{n} \times \sqrt{n} \times p}$, and then deposit each denoised $\hat{\mathcal{Z}}_{i}(\cdot,j)$ back at the corresponding location of the image (corresponding to $\hat{S}_i(j)$).
As we are working with overlapping patches, the pixel values of the aggregated image are normalized by the number of aggregated patches containing each pixel, which then generates the final denoised MM image.

\section{Experiments}
In this section, we first demonstrate the speed-up of Self-Convolution by incorporating it in several non-local image restoration algorithms, including gray-scale image denoising, inpainting, and multi-channel denoising algorithms.
Then we evaluate the promise of the proposed online Self-MM based multi-modality image restoration framework on RGB-NIR dataset.

\subsection{Speed-Up of Non-Local Algorithms by Self-Convolution}
We first demonstrate how the proposed Self-Convolution can significantly speed up non-local IR algorithms in general.
We select popular single-channel IR algorithms, including WNNM \cite{gu2014weighted}, STROLLR \cite{wen2017strollr2d}, GHP~\cite{zuo2013texture}, NCSR~\cite{dong2012nonlocally}, SAIST~\cite{dong2012nonlocal}, and PGPD~\cite{xu2015patch} for denoising; and TSLRA~\cite{guo2017patch} and GSR~\cite{zhang2014group} for inpainting, which all involve 2D BM.
Furthermore, we select multi-channel algorithms, including MCWNNM~\cite{xu2017multi} for multi-channel image denoising, SALT~\cite{wen2017joint} for video denoising, as well as Self-MM~\cite{guo2020self} for multi-modality image denoising, which all involve high-dimensional BM.

\begin{table*}[htbp]
  \centering
  \footnotesize
  \setlength{\tabcolsep}{0.7em}
    \adjustbox{width=0.8\linewidth}{
        \begin{tabular}{cl|ccc|ccc}
        \toprule
        \multirow{2}{*}{Task} & \multirow{2}{*}{Method} & \multicolumn{3}{c|}{Vanilla Block Matching} & \multicolumn{3}{c}{Self-Convolution} \\
        & & Total (s) & BM (s)  & BMtime\% & Total (s) & BM (s) & Speed-Ups of BM \\
        \midrule
        \multirow{6}[0]{*}{D} & WNNM~\cite{gu2014weighted}  & 63.2 & 43.8 &36.9\% & 23.3 & 7.8 & \cellcolor{Gray}\textbf{3$\times$} \\
& STROLLR~\cite{wen2017strollr2d}& 87.7 & 68.9 & 36.7\% & 38.2 & 13.3 & \cellcolor{Gray}\textbf{3$\times$}  \\ 
& GHP~\cite{zuo2013texture} & 412.6 & 218.3 & 69.9\% & 288.6 & 94.2 & \cellcolor{Gray}\textbf{3$\times$}  \\ 
& NCSR~\cite{dong2012nonlocally} & 134.7 & 82.4 & 57.1\% & 76.9 & 28.1 & \cellcolor{Gray}\textbf{3$\times$}  \\ 
& SAIST~\cite{dong2012nonlocal} & 708.2 & 562.2 & 32.0\% & 227.0 & 78.6 & \cellcolor{Gray}\textbf{3$\times$}  \\ 
& PGPD~\cite{xu2015patch} &305.2 & 89.6 & 85.3\%& 260.3  & 41.3 & \cellcolor{Gray}\textbf{6$\times$} \\
        \midrule
\multirow{2}{*}{I} & TSLRA~\cite{guo2017patch} &1264.8 & 1063.4 & 24.0\%& 303.1  & 116.2 & \cellcolor{Gray}\textbf{3$\times$} \\
& GSR~\cite{zhang2014group} &2515.7 & 2314.9 & 16.3\%& 410.5  & 209.7 & \cellcolor{Gray}\textbf{2$\times$} \\
  \midrule
  \multirow{3}{*}{MD} & MCWNNM~\cite{xu2017multi} &2899.0 & 2371.3 & 15.8\%& 458.6  & 61.6 & \cellcolor{Gray}\textbf{8$\times$} \\ 
& SALT~\cite{wen2017joint}  & 375.9 & 113.8 & 75.4\% & 294.8  & 33.2 & \cellcolor{Gray}\textbf{9$\times$}  \\ 
& Self-MM & 139.0 & 44.3 & 78.8\% & 109.5 & 16.3 & \cellcolor{Gray}\textbf{7$\times$}\\
        \bottomrule
        \end{tabular}
    }
    \vspace{0.1in}
    \caption{Runtime (in seconds) comparisons among non-local algorithms using BM and Self-Convolution: Task D refers to denoising $512\times 512$ single-channel images; Task I refers to inpainting $256\times 256$ single-channel images; and Task MD refers to multi-channel denoising of $256 \times 256 \times q$ multi-channel images, where BMtime\% denotes the runtime portion of BM.}
  \label{tab:runtime results}
\end{table*}

We work on $512 \times 512$ gray images for denoising, $256 \times 256$ gray images for inpainting, and $256 \times 256 \times q$ multi-channel data (The data modalities are different, \ie color image, video, and multi-modality image with $q = 3$, $20$, and $6$ for MCWNNM, SALT, and Self-MM, respectively)~\footnote{We tested $20$ images of the same size and recorded the average runtimes, and also observed that the variance is very small.}, to test the original runtimes for image recovery by these methods.
In order to ensure a fair comparison, for those single-channel denoising and inpainting algorithms, we fix the patch stride to $1$ pixel, patch size as $6\times 6$, and search window size as $30\times 30$. For those multi-channel denoising algorithms, we fix the patch stride to $1$ pixel, patch size as $6 \times 6 \times d$ ($d = 3$, $1$, and $3$ for MCWNNM, SALT, and Self-MM, respectively), and search window size as $30\times 30 \times c$ (the third dimension of search windows are different, with $c = 1$, $9$, and $2$ for MCWNNM, SALT, and Self-MM, respectively, where $c=1$ refers to 2D patch searching and $c>1$ means 3D patch searching). Our Self-Convolution can also handle other settings, \eg using a larger stride or larger patch size, and we obtain similar or proportional speedups after testing (the related analysis can be seen in Sec.~\ref{sec:computation}).
Furthermore, we replace the conventional BM used in these methods with the proposed Self-Convolution, and record the accelerated runtime comparing to the original ones in Table~\ref{tab:runtime results}.
All competing algorithms generate the same result as with Self-Convolution, which proves the implementation equivalence in practice.
Besides, we calculate the runtime portion of BM in the original implementations of each algorithm denoted as BMtime\%.
For most of the non-local algorithms, BM occupies a major portion of the runtime, which limits the overall efficiency.
Table~\ref{tab:runtime results} lists the original and accelerated BM runtime of the selected algorithms.
We show that the proposed Self-Convolution can provide \textbf{2}$\times$ to \textbf{9}$\times$ speedup for BM, especially for multi-channel IR algorithms. 
All the experiments are carried out in the Matlab (R2019b) environment
running on a PC with Intel(R) Core(TM) i9-10920K CPU 3.50GHz.

\subsection{Multi-Modality Image Denoising}

\begin{figure*}[t]
	\begin{center}
		\begin{tabular}{c@{ }c@{ }c@{ }c@{ }c@{ }c}
			\includegraphics[width=.24\textwidth]{./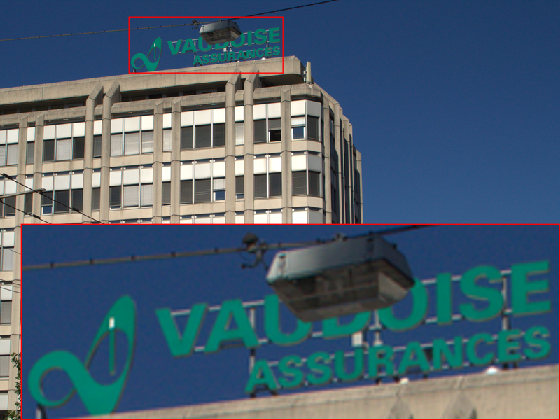}~&
			\includegraphics[width=.24\textwidth]{./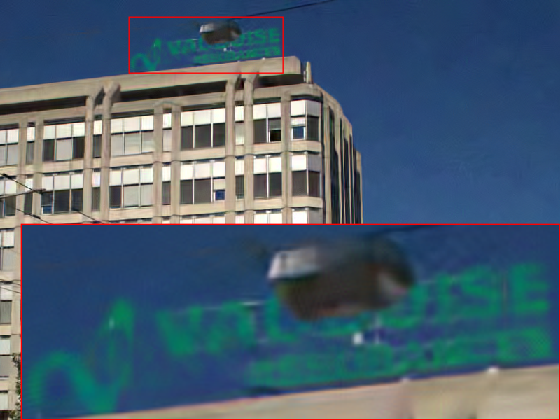}~&
			\includegraphics[width=.24\textwidth]{./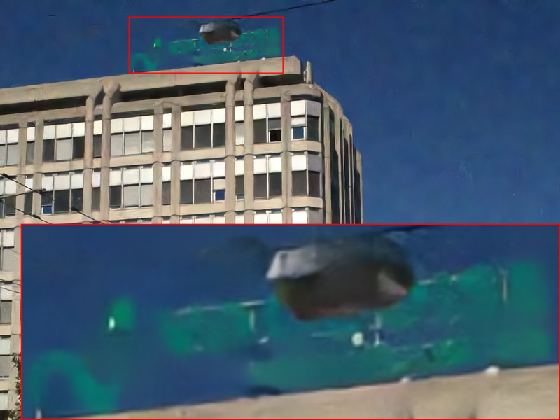}~&
			\includegraphics[width=.24\textwidth]{./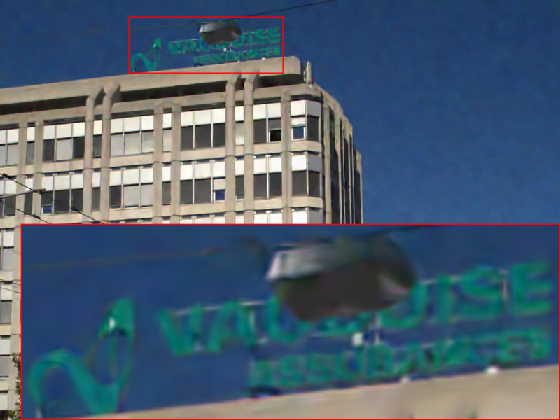}\\
			\includegraphics[width=.24\textwidth]{./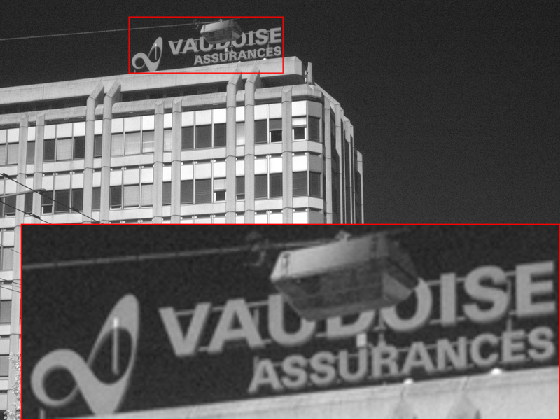}~&
			\includegraphics[width=.24\textwidth]{./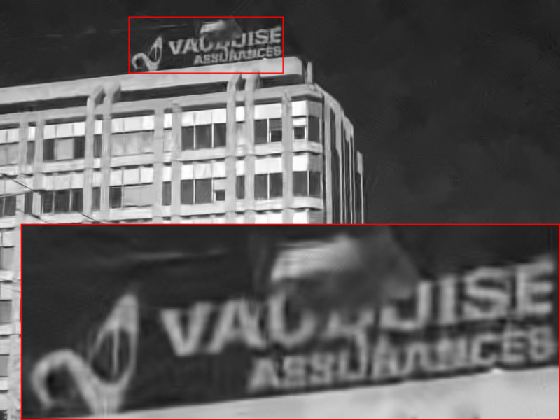}~&
			\includegraphics[width=.24\textwidth]{./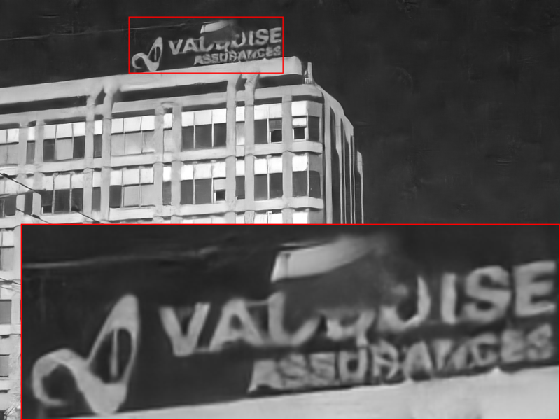}~&
			\includegraphics[width=.24\textwidth]{./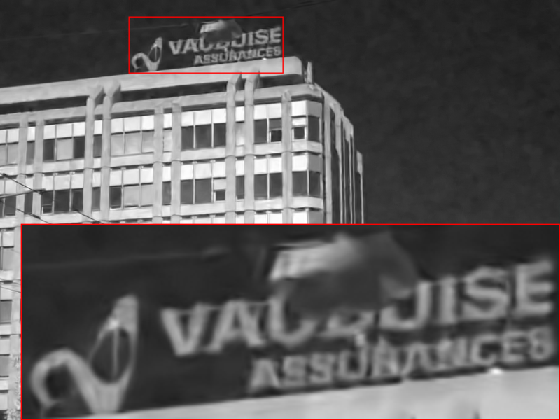}\\
	(a) Ground Truth & (b) C-BM3D, 26.78dB & (c) DnCNN, 27.05dB & (d) Ours, \textbf{27.22dB}\\
				\includegraphics[width=.24\textwidth]{./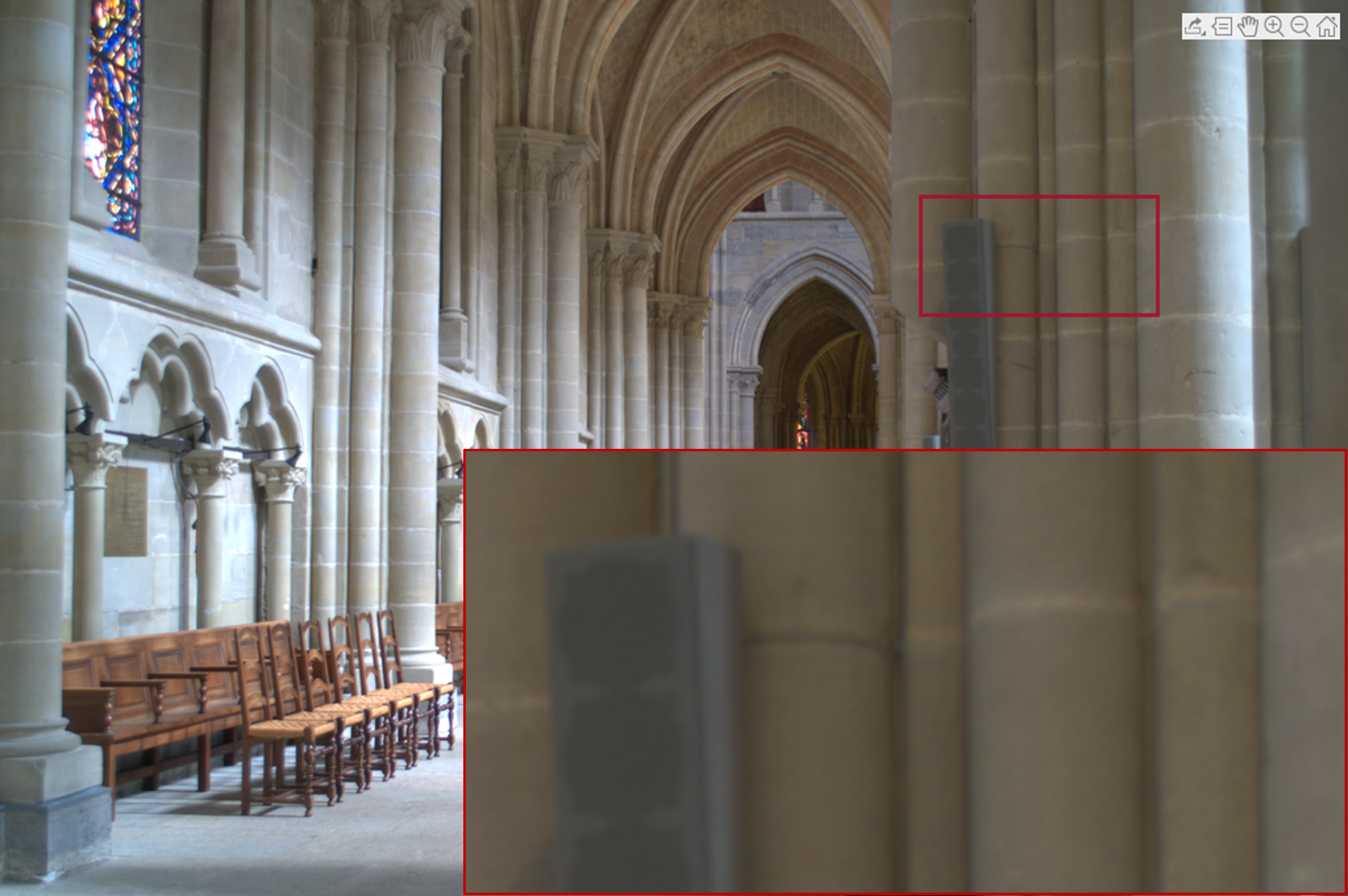}~&
				\includegraphics[width=.24\textwidth]{./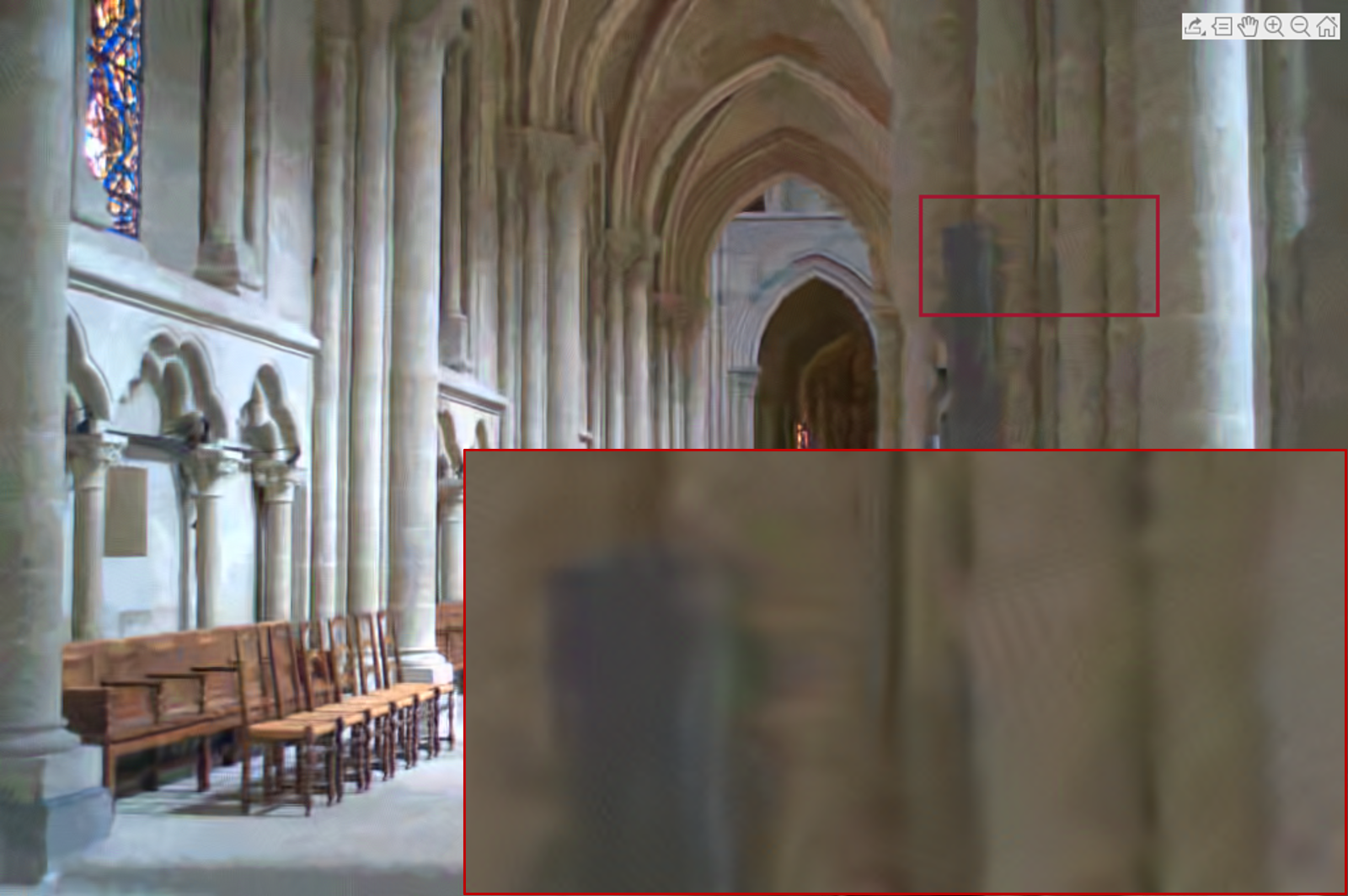}~&
				\includegraphics[width=.24\textwidth]{./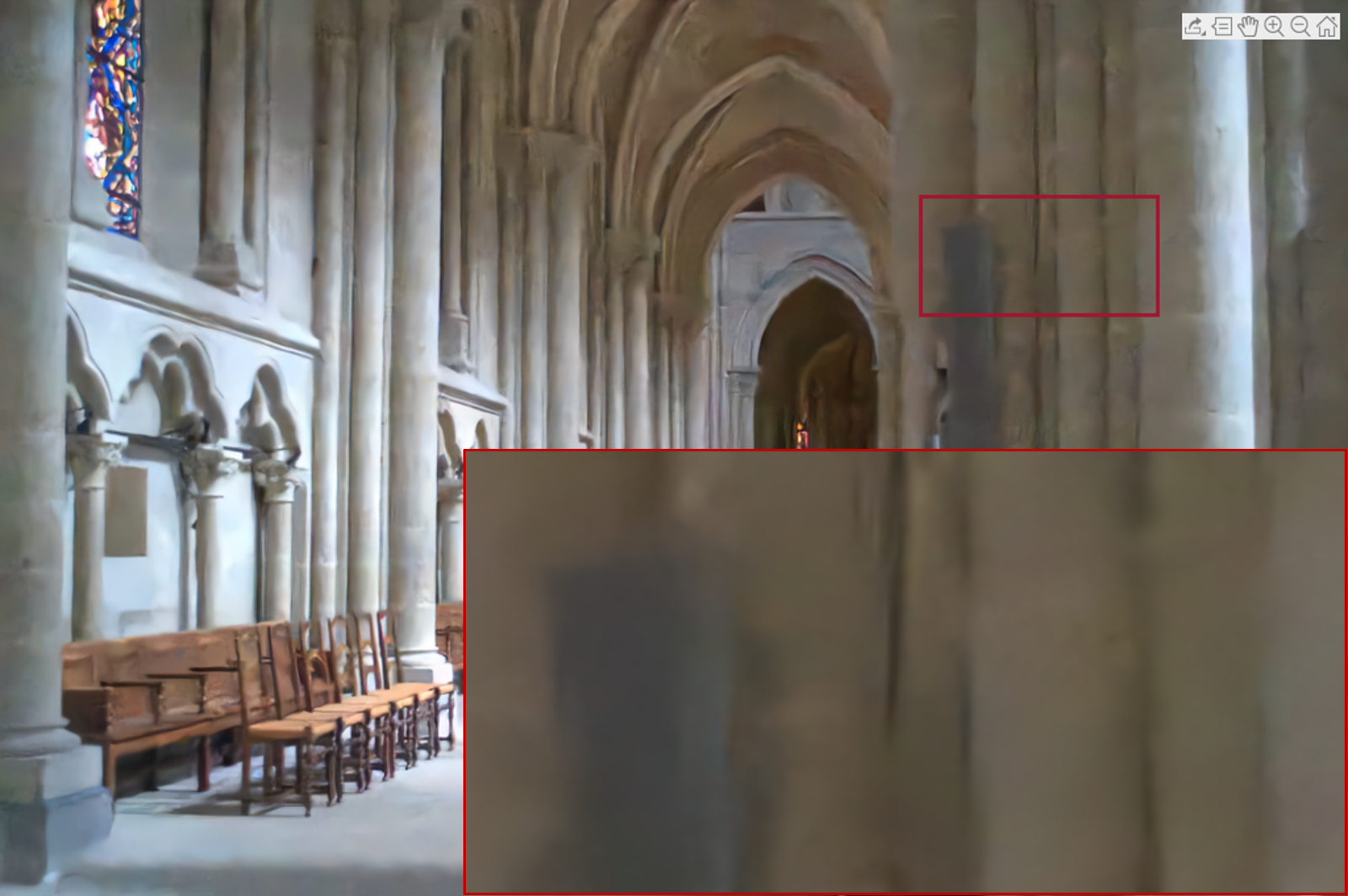}~&
				\includegraphics[width=.24\textwidth]{./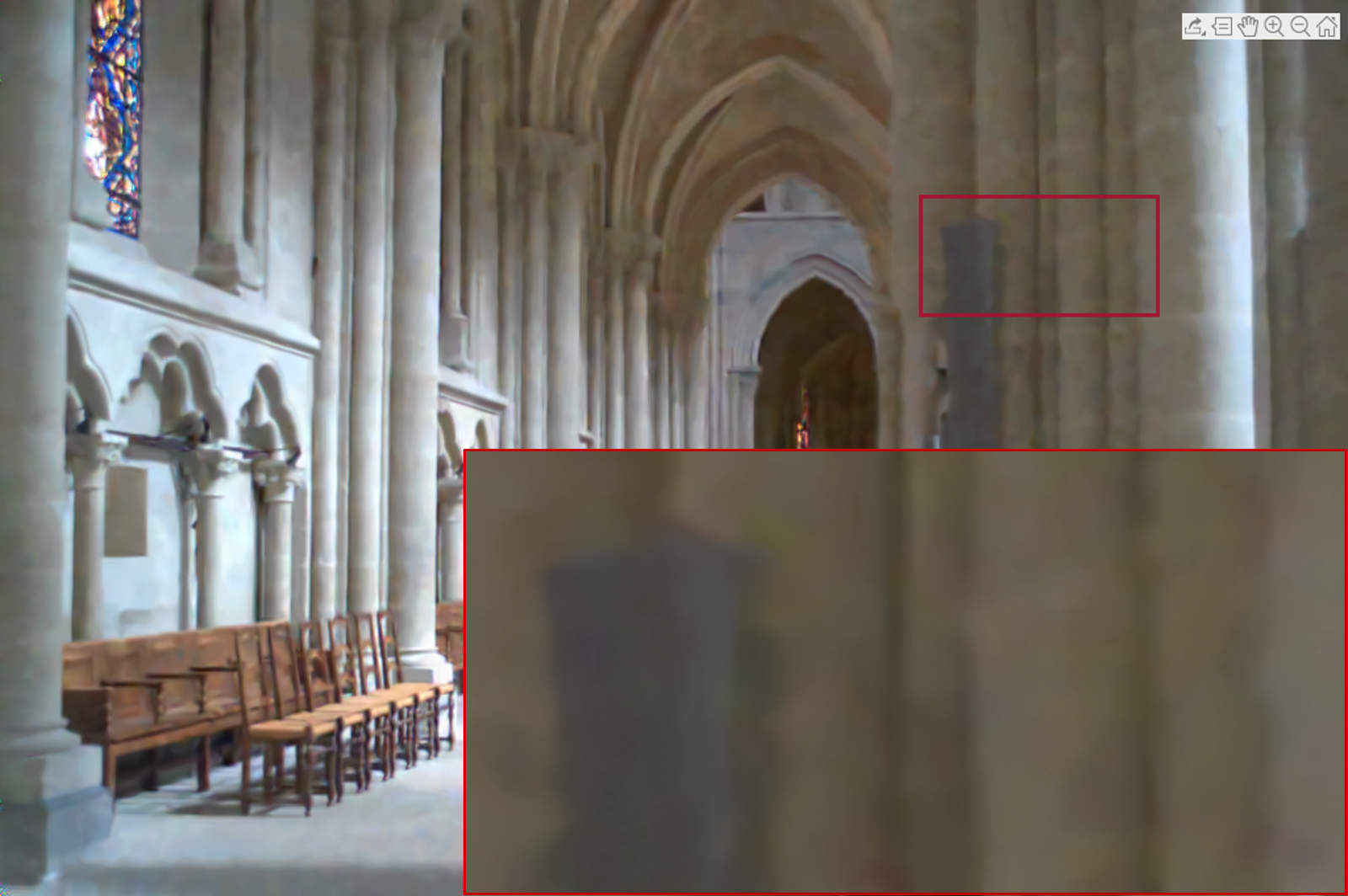}\\
				\includegraphics[width=.24\textwidth]{./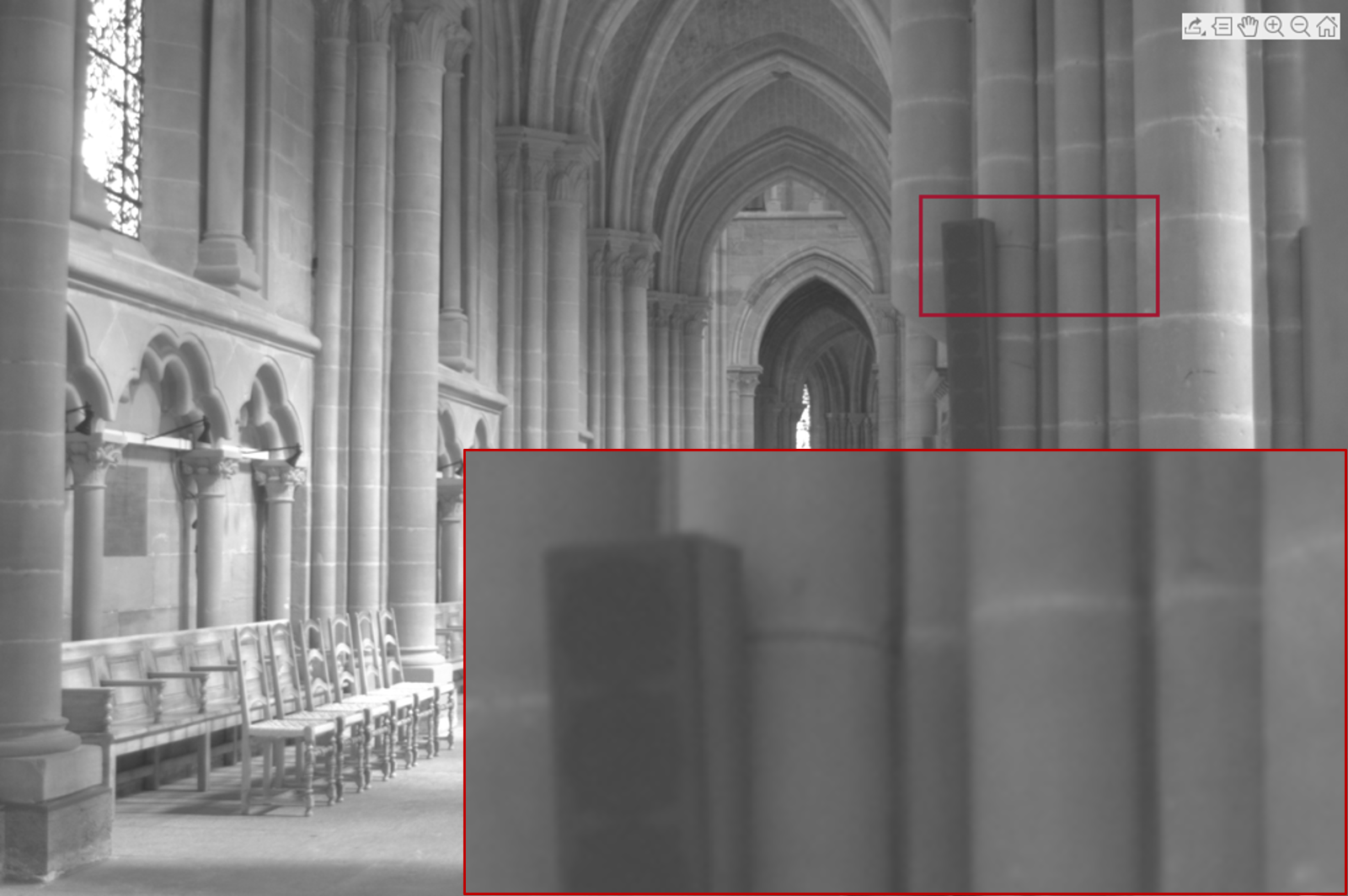}~&
				\includegraphics[width=.24\textwidth]{./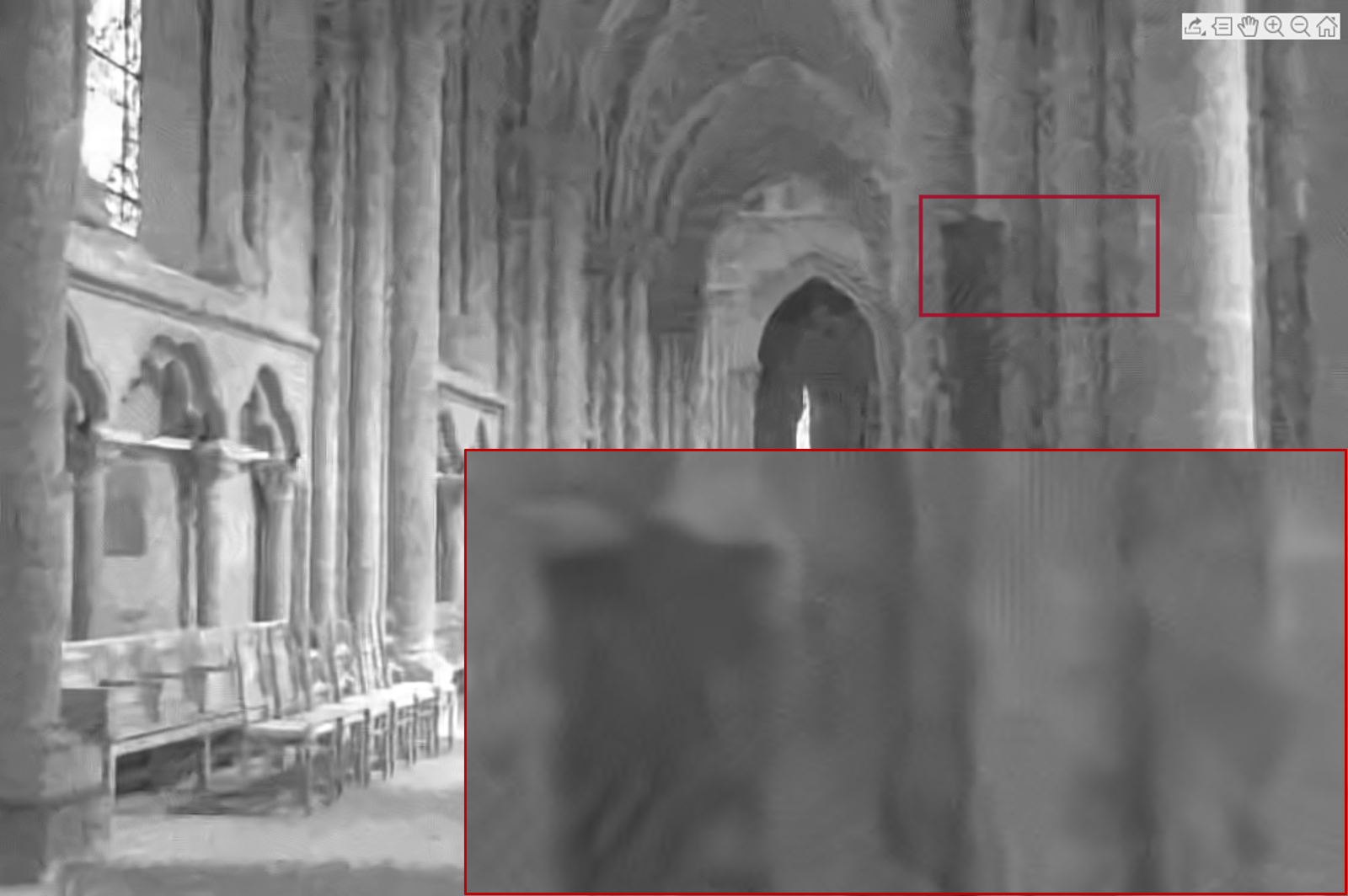}~&
				\includegraphics[width=.24\textwidth]{./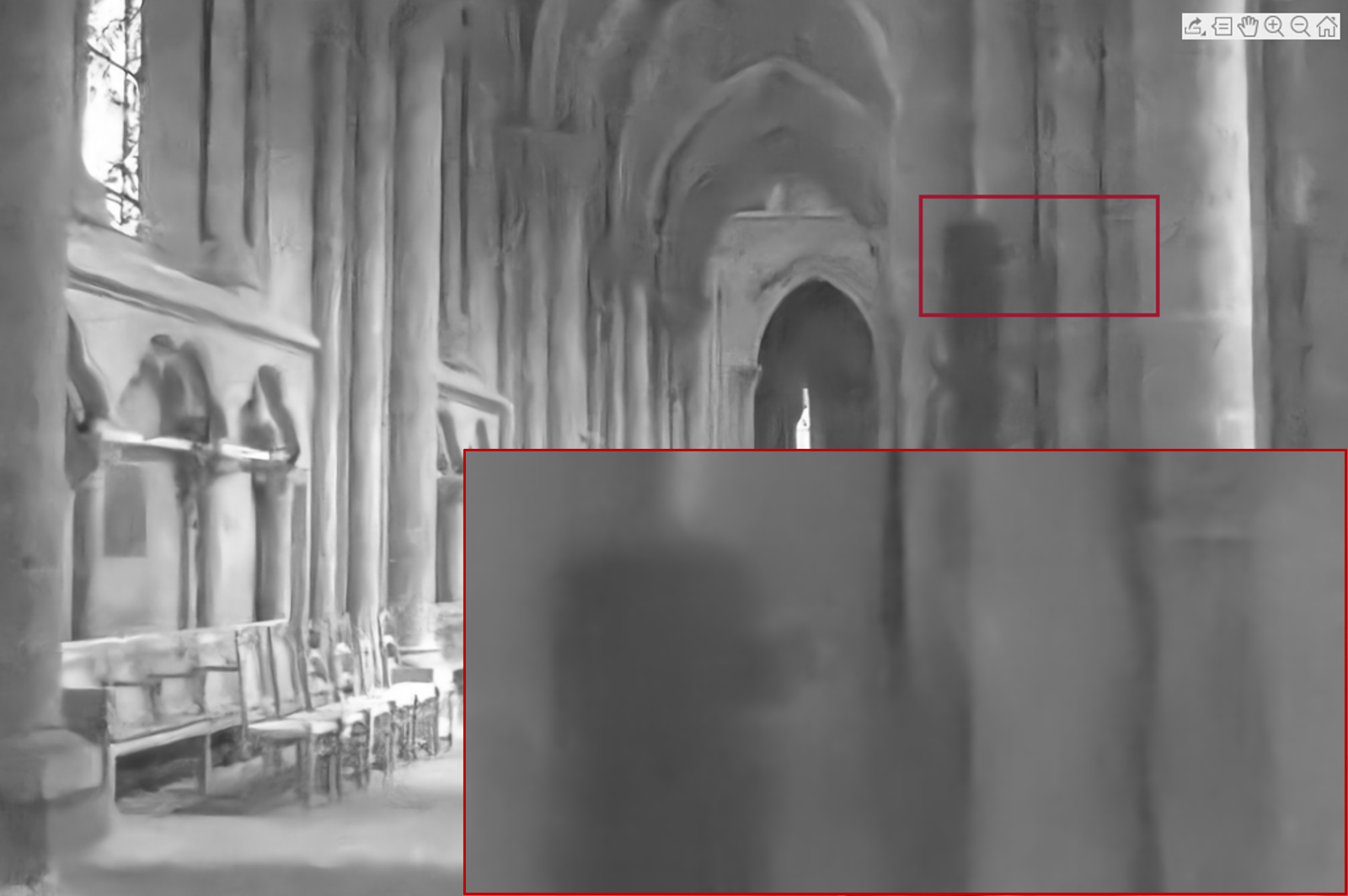}~&
				\includegraphics[width=.24\textwidth]{./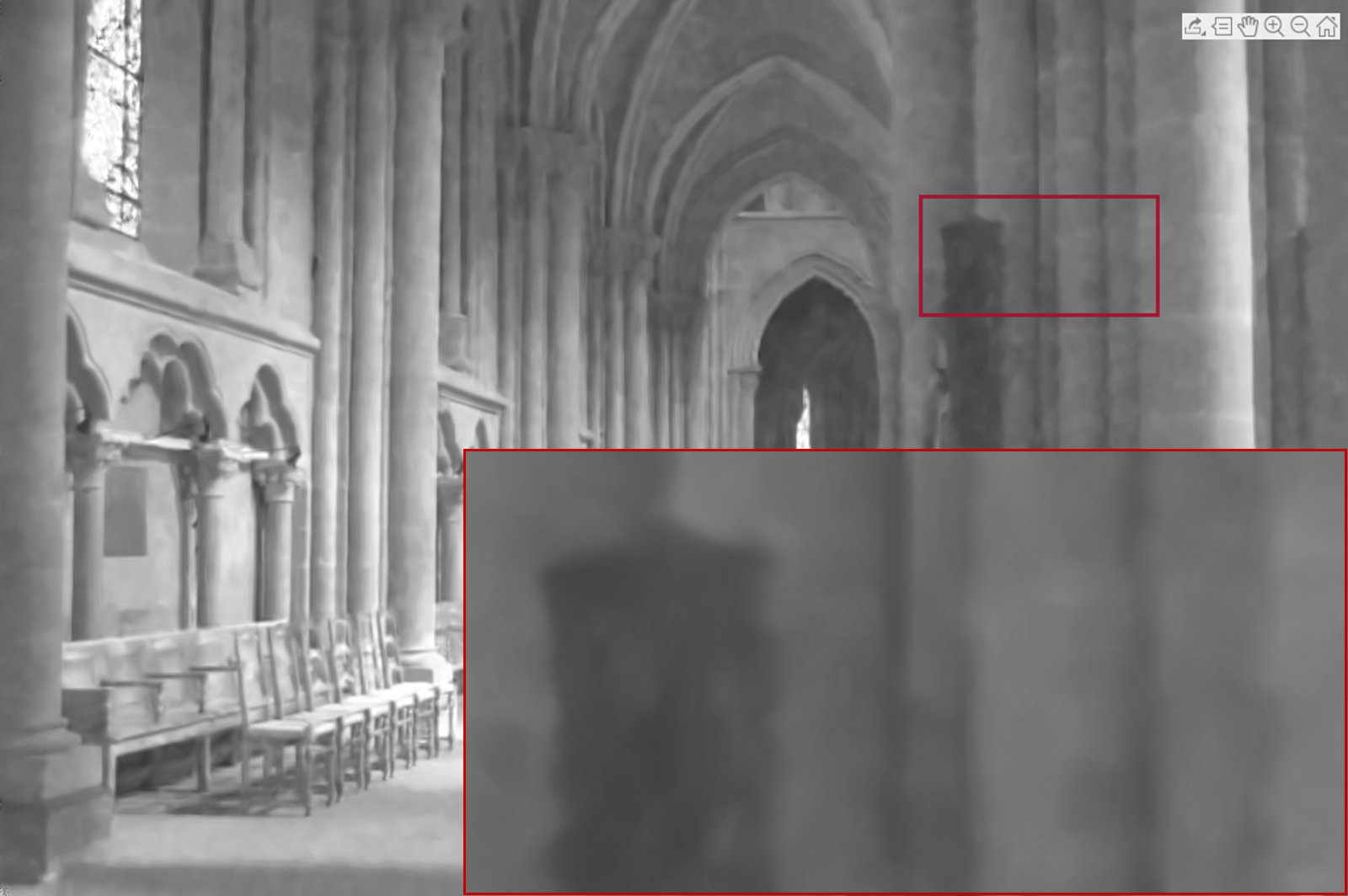}\\
	(a) Ground Truth & (b) C-BM3D, 31.07dB & (c) DnCNN, 30.78dB & (d) Ours, \textbf{31.60dB}\\
	\end{tabular}
	\end{center}
	
	\caption{Examples of the RGB-NIR image denoising ($\sigma=50$) results: top and bottom rows are RGB and NIR channels, respectively, of the (a) grouth truth, and the denoised results using (b) C-BM3D~\cite{dabov2006image,dabov2007color}, (c) DnCNN~\cite{zhang2017beyond}, and (f) our online Self-MM.}
	\label{fig:RGBNIR_res}
\end{figure*}

\begin{table}[!t]
\centering
\footnotesize
\setlength{\tabcolsep}{0.5em}
\adjustbox{width=1.\linewidth}{
    \begin{tabular}{l|ccc}
        \toprule
        Method &  Total Time (s) & BM Time (s) & PSNR (dB) \\ \midrule
        SAIST~\cite{dong2012nonlocal} & 9.5042e+03 &3.0413e+03 & 31.29 \\ 
        WNNM~\cite{gu2014weighted} &  2.9060e+03 & 970.5 & 31.81\\
        STROLLR~\cite{wen2017strollr2d} & 4.8876e+03 &1.7595e+03  & 31.62\\
        \rowcolor{Gray} Self-MM & \textbf{1.7914e+03} &\textbf{ 370.7} &\textbf{ 33.23}\\
        \bottomrule
    \end{tabular}
}
\vspace{0.1in}
   \caption{Runtime (in seconds) and PSNR performance (in dB) comparisons among existing non-local denoising algorithms using BM and our Self-MM averaged over $18$ images selected from the RGB-NIR Scene Dataset~\cite{BS11} ($\sigma=20$).}\label{tab:mm_runtime}
\end{table}

\begin{table}[!t]
\centering
\footnotesize
\setlength{\tabcolsep}{0.3em}
\adjustbox{width=1.\linewidth}{
    \begin{tabular}{l|ccccc}
        \toprule
         \multirow{2}{*}{Method} &      \multicolumn{5}{c}{Multi-Modality Image Denoising} \\
         &  $\sigma=5$ & $\sigma=10$& $\sigma=15$ & $\sigma=20$ & $\sigma=50$ \\ \midrule
       BM3D \cite{dabov2007image} & 39.04 & 35.22 & 33.15 & 31.75 & 27.66 \\ 
        C-BM3D~\cite{dabov2007image,dabov2007color} &  39.80 & 36.18 & 34.19 & 32.83 & 28.80 \\
        % WNNM~\cite{gu2014weighted} & 40.08 & 36.54 & 34.60 & 33.29 & 29.48 \\
        % STROLLR~\cite{wen2017strollr2d} & 40.08 & 36.54 & 34.60 & 33.29 & 29.48 \\
        DnCNN \cite{zhang2017beyond} &  39.98 & \textbf{36.51} & 34.55 & 33.20 & 29.17 \\
        FOCNet \cite{jia2019focnet} & - & - & - & - & 28.50 \\
              Self-MM w/o TL & 40.01 & 36.26 & 34.34 & 32.91 & 28.96 \\
       Self-MM w/o LR & 39.53 & 35.90 & 33.86 & 32.44 & 28.62 \\
      \rowcolor{Gray}  Self-MM & \textbf{40.27} & \textbf{36.51} & \textbf{34.60} & \textbf{33.23} & \textbf{29.21}\\
      \midrule 
       \multirow{2}{*}{Method} &      \multicolumn{5}{c}{Guided Image Denoising} \\
         &  $\sigma=5$ & $\sigma=10$& $\sigma=15$ & $\sigma=20$ & $\sigma=50$ \\ \midrule
        Guided Filter~\cite{he2012guided} & 34.01 & 32.79 & 31.56 & 30.39 & 25.24 \\
        Cross Field~\cite{yan2013cross} & 36.70 & 33.31 & 31.67 & 30.09 & 23.97\\
      \rowcolor{Gray}  Self-MM (RGB) & \textbf{40.25} & \textbf{36.49} & \textbf{34.79} & \textbf{32.90} & \textbf{28.64} \\
        \bottomrule
    \end{tabular}
}
    \vspace{0.1in}
    \caption{PSNRs (in dB) for (1) RGB-NIR image denoising and (2) NIR guided RGB image denoising, averaged over $18$ images selected from the RGB-NIR Scene Dataset~\cite{BS11}. For those noise levels for which FOCNet's trained models are not publicly available, we put ``-'' as the corresponding results are unavailable.}
    	\label{tab:multi-modality results}
\end{table}

\subsubsection{Datasets}
We present experimental results to validate the promise of the proposed multi-modality image denoising scheme, and evaluate the effectiveness of proposed algorithm over the large-scale benchmark dataset, \ie IVRG RGB-NIR \cite{BS11}, which contains 447 pairs of near-infrared (NIR) and RGB images in 9 categories captured with separate exposures using modified SLR cameras. The scene categories include: country, field, forest, indoor, mountain, old building, street, urban, and water. We randomly selected 2 RGB-NIR pairs from each category, \ie 18 pairs in all, as testing images to evaluate the performance of our proposed algorithm. The selected testing images contain 4 channels (R, G, B, and NIR), with similar spatial resolutions of  approximately 1024$\times$768. We simulated i.i.d. additive Gaussian noise in all 4 channels with different $\sigma$ = 5,  10, 15, 20, and 50 to generate noisy measurements for each image.

% \vspace{1mm}
% \noindent\textbf{The Whole Brain Atlas~\cite{summers2003harvard}} contains a huge compilation of modern cross-sectional images, including CT, MRI, and SPECT in health and disease.
% We selected different magnetic resonance (MR) images, \ie T1 weighted MR images and T2 weighted MR images following~\cite{deng2020deep}, denoted as MRT1-T2. 
% The processed testing images contain 2 channels, each of size $256\times 256$, as shown in Fig.~\ref{fig:mri_examples}.
% We simulate i.i.d. additive Gaussian noise in each of the two channels with different $\sigma$ = 5,  10, 15, 20, and 50 for each image.

% \noindent\textbf{Flickr1024 Stereo Dataset \cite{flickr1024, PASSRnet}} contains 1024 high-quality image pairs of left and right images, categorized into diverse senarios. We randomly selected 2 pairs from 12 different senarios mentioned in \cite{flickr1024}, including animals, building, indoors, landscapes, macroshots, nights, people, plants, sculptures, streets, synthetics and vehicles, \ie 24 pairs in all, as testing images. The selected testing images contain 6 channels, with different spatial resolutions ranging from 730 to 1060 px. We simulate i.i.d. additive Gaussian noise to all 6 channels with different $\sigma$ = 5,  10, 15, 20, and 50 for each image.

\subsubsection{Implementation Details and Parameters}
The proposed Multi-Modality image denoising scheme uses a self-supervised approach, and there are several hyperparameters used in the algorithm, among which we directly use the noisy image as the initial estimate. We work with square patches of size $\sqrt{n} \times \sqrt{n} = 6\times6$ and set the spatial search window $\sqrt{N_s}\times \sqrt{N_s}=30 \times 30$. We initialize the unitary sparsifying transform $\mathbf{W}_0$ to be the 4D DCT (of size $\sqrt{n} \times \sqrt{n} \times K \times 
d$, where $K$ is the number of matched patches from BM and $d$ is the number of patch channels. We follow the guideline in~\cite{wen2020image}, and set the hyper-parameters in online Self-MM  to be $\gamma_s = \gamma_l = 1$. 
Similar to~\cite{wen2020image}, for low-level noise, \ie $0 \leq \sigma < 20$, we apply one-pass denoising scheme with $\beta = 3.3 \sigma$ and $\theta = 1.5 \sigma$; for high-level noise, \ie $\sigma \geq 20$, we apply multi-pass denoising scheme with $6$ iterations, \ie the output of the previous pass is weighted averaged with the original noisy image (weights of $0.9$ and $0.1$ for the previous output and the noisy image, respectively). 
% Similar to~\cite{wen2020image}, for low-level noise, \ie $0 \leq \sigma \leq 20$, we apply the one-pass denoising scheme with $\beta = 3.3$ and $\theta = 1.5 \sigma$; for high-level noise, \ie $\sigma > 20$, we apply a multi-pass denoising scheme with $6$ iterations. Specifically, for every update iteration, the input image $\hat{\mathbf{x}}_t$ is a combination of the denoised estimate from the last iteration $\hat{\mathbf{x}}_{t-1}$ and the original noisy image $\mathbf{y}$, where we set the weight for the noisy image as $\delta = 0.1$. 
Then, we re-estimate the variance of $\hat{\mathcal{Y}}_t$ based on $\sigma_{t}=\psi \sqrt{\sigma^{2}-(1 / N)\left\|\mathcal{Y}-\hat{\mathcal{Y}}_t\right\|^{2}}$, where the factor $\psi$ is set to be 0.71 to alleviate the over-estimated noise level. 
At the iteration $t=1,2,\dots,6$, we set the penalty parameter $\beta=0.9\sigma_{t-1}$ and $\theta = 0.8\sigma_{t-1}(\sqrt{n}+\sqrt{N})$~\cite{wen2017joint}, using the re-estimated $\sigma_{t-1}$ ($\sigma_0=\sigma$ at the first iteration).
Additionally, we set $K$ = 20, for $\sigma$ = 5, 10, 15, and $K=25$ for $\sigma = 20, 50$, respectively.

\subsubsection{Comparisons with State-of-the-Art Methods}
We compare our proposed algorithm to various popular or state-of-the-art denoising algorithms on efficiency and effectiveness.
% To evaluate the effectiveness of the image recovery algorithms, we measured the peak signal-to-noise ratio (PSNR) in decibels (dB), which is computed between the ground truth image and the recovered image.

\vspace{1mm}
\noindent\textbf{Efficiency evaluation.}
We demonstrate the efficiency comparison of the proposed online Self-MM and some popular classic non-local image denoising methods, \ie SAIST~\cite{dong2012nonlocal}, WNNM~\cite{gu2014weighted}, and STROLLR~\cite{wen2017strollr2d}~\footnote{Here we drop the BM3D~\cite{dabov2006image} algorithm since it was mainly implemented by C programming language, which is unfair to compare with other Matlab implemented algorithms.}.
We ran the methods on all testing RGB-NIR images and recorded the average run time, block matching time, and PSNR.
Existing algorithms are very time-consuming when handling such large-scale and high-dimensional data.
With the merits of the online learning scheme and the Self-Convolution algorithm, the proposed online Self-MM significantly outperforms the competing methods in both effectiveness and efficiency, as shown in Table~\ref{tab:mm_runtime}.

\vspace{1mm}
\noindent\textbf{Effectiveness evaluation.}
% \vspace{1mm}
% \noindent\textbf{Competing methods.}
We first compare the proposed online Self-MM to existing popular image denoising algorithms, including the classic non-local methods BM3D~\cite{dabov2007image} and C-BM3D~\cite{dabov2007color}, and deep denoisers DnCNN~\cite{zhang2017beyond} and FOCNet~\cite{jia2019focnet}.
\add{
To denoise the four-channel data (RGB-NIR), for classic algorithms, we apply channel-wise BM3D denoising as well as BM3D $+$ C-BM3D (denoted as C-BM3D in Table~\ref{tab:multi-modality results} and Figure~\ref{fig:RGBNIR_res}) to denoise NIR and RGB components, respectively; For the deep denoisers, we apply the RGB and gray-scale DnCNN models to denoise the RGB and NIR components, respectively, and apply FOCNet for channel-wise denoising.}
% To denoise two-channel data (MRT1-T2), we apply channel-wise denoising on all competing methods.
Besides, to better understand the proposed online Self-MM scheme, we perform an ablation study by running our algorithm but only 
%(i) RGB channels denoising result, 
(i) applying low-rank penalties (denoted as ``w/o TL'') or only (ii) applying transform learning regularizer (denoted as ``w/o LR'').
We also compare our method to guided denoising methods, \ie Guided Filtering~\cite{he2012guided} and Cross-Field Filtering~\cite{yan2013cross}.
As the guided denoising methods~\cite{he2012guided,yan2013cross} are for general image fusion (\ie not mainly for denoising), and require the guidance channel to be clean, we follow the same setup with the oracle clean NIR channels available as guidance for RGB-NIR data and report PSNR values for RGB denoising.
To compare with these guided denoising methods, we also vary the proposed online Self-MM to denoise only the RGB channels, denoted as ``Self-MM (RGB)''.
Table~\ref{tab:multi-modality results} lists the PSNRs (in dB) of RGB-NIR denoising results for different noise levels. The displayed PSNRs are averaged over the randomly selected $18$ images from the IVRG RGB-NIR Dataset~\cite{BS11}.
The proposed algorithm outperforms all competing methods on average, and for most noise levels. 
Besides, as a self-supervised learning algorithm, our method generates superior results when the noise level $\sigma$ is low. 
On the contrary, deep algorithms demonstrate advantages when the data become more noisy.
%Our method provides better PSNRs than all of the competing methods for almost all image and noise levels, improving average PSNR 0.39dB, 1.16dB,  respectively, over the B+C-BM3D, BM3D denoising methods.
Fig.~\ref{fig:RGBNIR_res} shows examples of the denoised RGB-NIR images ($\sigma =50$), where our method preserves more detailed textures and alleviates distortion with better visual quality, compared to competing methods.

\section{Conclusion}
In this paper, we proposed a novel Self-Convolution operator to unify non-local image modeling. 
We showed that Self-Convolution generalizes many commonly used non-local operators, including block matching and non-local means. 
%while reducing the computational complexity from $O(N^2)$ to $O(N\text{log}N)$.
Based on Self-Convolution, we proposed a multi-modality image restoration scheme called online Self-MM. 
Extensive experiments show that (1) Self-Convolution can speed up block matching up to $9\times$ in many popular algorithms and (2) Online Self-MM outperforms many popular algorithms for denoising RGB-NIR images in terms of both effectiveness and efficiency.
%While this work provides an initial study of the promise of the proposed highly-efficient non-local model, 
\add{We plan to study more structured Self-Convolution with further acceleration in future work.}

% if have a single appendix:
%\appendix[Proof of the Zonklar Equations]
% or
%\appendix  % for no appendix heading
% do not use \section anymore after \appendix, only \section*
% is possibly needed

% use appendices with more than one appendix
% then use \section to start each appendix
% you must declare a \section before using any
% \subsection or using \label (\appendices by itself
% starts a section numbered zero.)
%

\appendices
\setcounter{lemma}{0}
\setcounter{proposition}{0}
\setcounter{theorem}{0}
%\input{./appendix}

% use section* for acknowledgment
% \section*{Acknowledgment}

% The authors would like to thank...

% Can use something like this to put references on a page
% by themselves when using endfloat and the captionsoff option.
\ifCLASSOPTIONcaptionsoff
  \newpage
\fi

% trigger a \newpage just before the given reference
% number - used to balance the columns on the last page
% adjust value as needed - may need to be readjusted if
% the document is modified later
%\IEEEtriggeratref{8}
% The "triggered" command can be changed if desired:
%\IEEEtriggercmd{\enlargethispage{-5in}}

% references section

% can use a bibliography generated by BibTeX as a .bbl file
% BibTeX documentation can be easily obtained at:
% http://mirror.ctan.org/biblio/bibtex/contrib/doc/
% The IEEEtran BibTeX style support page is at:
% http://www.michaelshell.org/tex/./bibtex/
\bibliographystyle{IEEEtran}
% argument is your BibTeX string definitions and bibliography database(s)
\bibliography{myref}
\end{document}